\newtheorem{theorem}{Theorem}[section]
\newtheorem{corollary}[theorem]{Corollary}
\newtheorem{lemma}[theorem]{Lemma}
\title{On Computing Makespan-Optimal Solutions for Generalized Sliding-Tile Puzzles}
\author{Marcus Gozon \qquad\quad Jingjin Yu}
\def\gstp{\textsc{GSTP}\xspace}
\def\mogstp{\textbf{\textsc{MOGSTP}}\xspace}
\def\cfc{\textsc{CFC}\xspace}
\def\ttfsat{\textbf{\textsc{2/2/4-SAT}}\xspace}
\begin{document}
\maketitle
\begin{abstract}
In the $15$-puzzle game, $15$ labeled square tiles are reconfigured on a $4\times 4$ board through an escort, wherein each (time) step, a single tile neighboring it may slide into it, leaving the space previously occupied by the tile as the new escort. We study a generalized sliding-tile puzzle (\gstp) in which (1) there are $1+$ escorts and (2) multiple tiles can move synchronously in a single time step. Compared with popular discrete multi-agent/robot motion models, \gstp provides a more accurate model for a broad array of high-utility applications, including warehouse automation and autonomous garage parking, but is less studied due to the more involved tile interactions.
In this work, we analyze optimal \gstp solution structures, establishing that computing makespan-optimal solutions for \gstp is NP-complete and developing polynomial time algorithms yielding makespans approximating the minimum with expected/high probability constant factors, assuming randomized start and goal configurations.
\end{abstract}

\section{Introduction}\label{sec:intro}
The $15$-puzzle \cite{loyd1959mathematical} is a sliding-tile puzzle in which fifteen interlocked square tiles, labeled $1$-$15$, and an empty escort square are located on a $4\times 4$ square game board (see Fig.~\ref{fig:15-gstp}). In each time step, a tile neighboring the escort may slide into it, leaving an empty square that becomes the new escort. The game's goal is to reconfigure the tiles to realize a row-major ordering of the labeled tiles. We study a natural generalization of the $15$-puzzle, in which the game board is an arbitrarily large rectangular grid with $1+$ escorts.
In addition, tiles can move synchronously in a given time step assuming no collision under uniform movement.
% A straight block of contiguous tiles may slide synchronously in the same direction, provided that the head of the block is slid into an escort.
We call this problem the \emph{generalized sliding-tile puzzle} or \gstp. 
% This doesn't seem to capture all the movement possible in the GSTP; maybe something along the lines of "tiles can move synchronously in a single step assuming no collision under uniform movement"
%general comment: this paragraph seems to be the same as the abstract - should there be something else? Seems a little redundant.
% It is fine to have redundancy between the abstract and the rest of the paper - this is actually common.

\begin{figure}
    \centering
    \begin{overpic}
    [width=0.60\columnwidth]{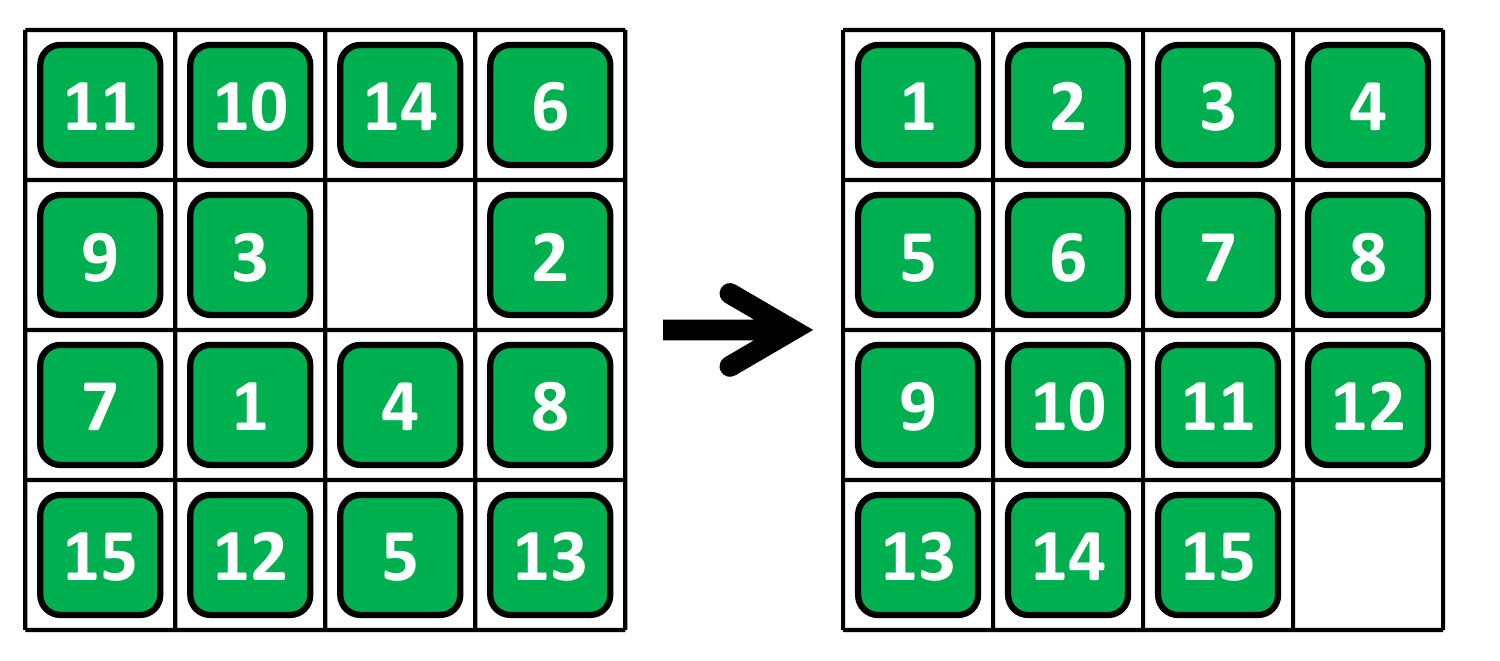}
    %{figs/MOGSTP Outline assembled.png}
    %\put(11.3, 2){{\small (a)}}
    %\put(67.8, 2){{\small (b)}}
    \end{overpic}
    \caption{Start and goal configurations of a $15$-puzzle instance. In  \gstp, there can be $1+$ escorts and multiple tiles may move synchronously, e.g., tile $3$ and $9$ may move to the right in a single step in the left configuration.}
    \label{fig:15-gstp}
\end{figure}

\gstp provides a high-fidelity discretized model for multi-robot applications operating in grid-like environments, including the efficient coordination of a large number of robots in warehouses for order fulfillment \cite{wurman2008coordinating,mason2019developing}, motion planning in autonomous parking garages \cite{guo2023efficient}, and so on. 
A particularly important feature of \gstp is that, given two neighboring tiles sharing a side, one tile may only move in the direction toward the second tile if the second tile moves in the same direction. Otherwise, if the second tile moves in a perpendicular direction, a collision occurs, which we call the \emph{corner following constraint} or \cfc. 
Consideration of \cfc renders \gstp different from popular multi-agent/robot pathfinding (MAPF) problems \cite{stern2019multi} in which a classical formulation allows the second tile to move in a direction perpendicular to the moving direction of the first tile. Ignoring \cfc significantly reduces the steps required to solve a tile reconfiguration problem, making computing optimal solutions less challenging, but is less accurate in modeling many real-world applications.  

Given the strong connections between \gstp and today's grid-based multi-robot applications seeking ever more optimal solutions, we must have a firm grasp on the fundamental optimality structure of \gstp. Towards achieving such an understanding, this work studies the induced optimality structure in computing makespan-optimal solutions for \gstp, and brings forth the following main contributions: 
\begin{itemize}
    \item We establish that computing makespan-optimal solutions for \gstp is NP-complete with or without an enclosing grid. The problem remains NP-complete when there are $\lfloor |G|^\epsilon \rfloor$ escorts, where $|G|$ is the grid size (i.e., the total number of grid cells) and $0 < \epsilon < 1$ is a constant. 
    \item We establish tighter makespan lower bounds for \gstp for all possible numbers of escorts. On an $m_1 \times m_2$ grid with $k$ escorts, in expectation, solving \gstp requires $\Omega(\frac{m_1 m_2}{k})$ steps for $1 \le k < \min(m_1, m_2)$ and $\Omega(m_1 + m_2)$ steps for $k \ge \min(m_1, m_2)$. 
    \item We establish tighter makespan upper bounds for \gstp for all possible numbers of escorts that match the corresponding makespan lower bounds, asymptotically, thus closing the makespan optimality gap for \gstp. This leverages a key intermediate result showing that \gstp instances on $2\times m$ and $3 \times m$ grids can be solved in $O(m)$ steps. For all upper bounds, via careful analysis, we further provide a constant factor that is relatively low, considering \cfc's severe restrictions on tile movements.  
\end{itemize}

Some proofs are sketched or omitted; see supplementary materials for additional details. 

\section{Related Work}
Modern studies on MAPF and related problems originated from the investigation of the generalization of the $15$-puzzle \cite{loyd1959mathematical} to the ($N^2-1$)-puzzle, with work addressing both computational complexity \cite{RATNER1990111} and the computation of optimal solutions \cite{culberson1994efficiently,culberson1998pattern}. Gradually, graph-theoretic abstractions emerged that introduced non-grid-based environments and allowed more escorts (i.e., there can be more than one empty vertex on the underlying graph). 
Whereas such problems are solvable in polynomial time if only a feasible solution is desired \cite{kornhauser1984coordinating,auletta1999linear,yu2013linear}, computing optimal solutions are generally NP-hard \cite{wilson1974graph,goldreich2011finding,surynek2010optimization,yu2013structure,Demaine2019CMP}.
With the graph-based generalization, \cfc is generally not enforced as the geometric constraint lengthens a motion plan and complicates the reasoning. 

Due to its close relevance to a great many high-impact applications, e.g., game AI \cite{pottinger1999implementing}, warehouse automation \cite{wurman2008coordinating,mason2019developing}, great interests started to develop in quickly computing (near-)optimal solutions for MAPF \cite{silver2005cooperative}. With this development, a variant of the ($N^2-1$)-puzzle was introduced, which does not require the presence of escorts \cite{standley2010finding}. In other words, in the most well-studied MAPF formulation, any non-self-intersecting chain of agents may potentially move synchronously, one following another, in a single step. In \cite{standley2010finding}, a bi-level algorithmic solution framework, \emph{operator decomposition} (OD) $+$ \emph{independence detection} (ID), is built upon the general idea of \emph{decoupling} \cite{erdmann1987multiple}, which treats each agent individually as if other agents do not exist and handles agent-agent interactions on demand. 
A super-majority of modern MAPF methods have generally adopted a bi-level decoupling search approach. Representative work along this line includes \emph{increasing cost-tree search} (ICTS) \cite{sharon2013increasing}, 
\emph{conflict-based search} (CBS) and variants \cite{sharon2015conflict,barer2014suboptimal,li2021eecbs}, \emph{priority inheritance with backtracking} (PIBT) \cite{okumura2022priority}, and most recently, \emph{lazy constraints addition search
for MAPF} (LaCAM) \cite{okumura2023lacam}. Besides search-driven methods, reduction-based approaches have also been proposed \cite{surynek2012towards,erdem2013general,yu2016optimal}. 

In contrast, MAPF formulations similar to \gstp, i.e., considering \cfc, have received relatively muted attention. On the side of computational complexity, besides the hardness result of the $(N^2-1)$-puzzle \cite{RATNER1990111} and a recent followup \cite{DEMAINE201880}, it has been shown that computing total distance-optimal solutions with \cfc is NP-complete in environments with specially crafted obstacles \cite{geft2022refined}. We note that sliding-tile puzzles can easily become PSPACE-hard in non-grid-based settings \cite{hopcroft1984complexity}, even for unlabeled tiles \cite{solovey2015hardness}. While of practical importance, hardness for computing optimal solutions for \gstp in obstacle-free settings has not been established. 
On the side of computational efforts in addressing \gstp, \cfc has been studied partially as part of $k$-robustness \cite{atzmon2018robust}. A recent SoCG competition has been held \cite{fekete2022computing} that addresses exactly the \gstp problem but with a focus on computing solutions for a set of benchmark problems. A variation of \gstp was studied in \cite{guo2023efficient} targeting autonomous parking garage applications. These computational studies largely leave unanswered fundamental questions on \gstp, including computational complexity and optimality bounds.

\section{Preliminaries}\label{sec:prelim}
\subsection{The Generalized Sliding-Tile Puzzle}
In the \emph{generalized sliding-tile puzzle} (\gstp), on a rectangular $m_1 \times m_2$ grid $G=(V, E)$ lies $n < m_1m_2$ tiles, uniquely labeled $1, \ldots, n$. 
A \emph{configuration} of the tiles is an injective mapping from $\{1, \ldots, n\} \to V = \{(v_y, v_x)\}$ where $1 \le v_y \le m_1$ and $1 \le v_x \le m_2$. 
Tiles must be reconfigured from a random configuration $\mathcal S=\{s_1, \ldots, s_n\}$ to some goal configuration $\mathcal G=\{g_1, \ldots, g_n\}$, usually a row-major ordering of the tiles, subject to certain constraints. 
Specifically, let the \emph{path} of tile $i$, $1\le i \le n$, be $p_i : \mathbb{N}_0 \to V$, and so \gstp seeks a \emph{feasible path set} $P = \{p_1, \ldots, p_n\}$ such that the following constraints are met for all $1\le i, j\le n$, $i \ne j$ and $\forall t \geq 0$: 
\begin{itemize}
    \item Continuous uniform motion: $p_i(t+1) = p_i(t)$ or $(p_i(t+1), p_i(t)) \in E$,
    \item Completion: $p_i(0) = s_i$ and $p_i(T) = g_i$ for some $T\ge 0$,
    \item No meet collision: $p_i(t) \neq p_j(t)$,
    \item No head-on collision: $(p_i(t)=p_j(t+1) \land p_i(t+1) = p_j(t)) = false$,
    \item Corner-following constraint: let $e_i(t) = p_i(t+1) - p_i(t)$ be the movement direction vector. If $p_i(t+1) = p_j(t)$, then $e_i(t) \not\perp e_j(t)$.
\end{itemize}

Let $T_P$ be the smallest $T\ge 0$ such that the completion constraint is met for a given path set $P$. Naturally, it is desirable to compute $P$ with minimum $T_P$. We define the decision version of makespan-optimal \gstp as follows. 

\vspace{2mm}
\noindent\mogstp\\
\noindent INSTANCE: A \gstp instance and a positive integer $K$.\\
\noindent QUESTION: Is there a feasible path set $P$ with $T_P \le K$?
%\vspace{2mm}

\subsection{\ttfsat}\label{sec:224}
We will need a specialized SAT instance called \ttfsat for our hardness result, defined as follows. 

\vspace{2mm}
\noindent \ttfsat\\
\noindent \textbf{INSTANCE}: A boolean satisfiability instance with $n$ variables $x_1, \ldots x_n$ and $n$ clauses $c_1, \ldots c_n$. Each clause $c_j$ has $4$ literals, and each variable $x_i$ appears across all clauses exactly $4$ times in total, twice negated and twice unnegated.

\noindent \textbf{QUESTION}: Is there an assignment to $x_1, \ldots, x_n$ such that each clause $c_i$ has exactly two true literals?
\vspace{2mm}

\ttfsat was shown to be NP-complete in \cite{RATNER1990111}, which is subsequently employed to show the hardness of the $(N^2-1)$-puzzle.
% Note that a \ttfsat is satisfiable if and only if exactly $2n$ literals are true. 

\subsection{Feasibility and Known Makespan Bounds}
It is well-known that the $(N^2-1)$-puzzle may not always have a solution \cite{loyd1959mathematical} due to the configurations forming two connected graphs. More formally, it can be shown that the configurations of an $(N^2-1)$-puzzle are partitioned into two groups, each of which is isomorphic to the \emph{alternating group} $A_{N^2-1}$ \cite{wilson1974graph}. Because moves on a \gstp instance on an $N \times N$ grid with a single escort can be ``slowed down'' to equivalent moves on an $(N^2-1)$-puzzle, they share the same feasibility. The same remains true for rectangular grids. Checking feasibility can be performed in linear time \cite{wilson1974graph}. On the other hand, also clear from \cite{wilson1974graph}, when there are two or more escorts, a \gstp instance is always feasible. To summarize,
\begin{lemma}
\gstp with a single escort may be infeasible. The feasibility of \gstp with a single escort can be checked in linear time. \gstp with two or more escorts is feasible. 
\end{lemma}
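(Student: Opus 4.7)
The plan is to leverage Wilson's classical theorem~\cite{wilson1974graph} on sliding-tile puzzles, after reducing the single-escort variant of GSTP to the classical one-tile-at-a-time puzzle, and then dispatch the parity check and the multi-escort case.

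\textbf{Step 1: single-escort infeasibility.} I would first show that, with exactly one escort, every synchronous GSTP step is a single chain $T_1,\ldots,T_k$ of tiles sliding in a common direction $d$ with $T_1$ entering the escort. The corner-following constraint forces chain structure: whenever a moving tile $T_i$ moves into the cell vacated by another moving tile $T_j$, the vectors $e_{T_i}$ and $e_{T_j}$ cannot be perpendicular (by CFC) and cannot be antiparallel (that swap would be a head-on collision), so on a 4-connected grid they must coincide. Since only one escort is present and every such chain must be headed by a tile entering an escort, all moving tiles belong to a single chain. Serializing the chain into $k$ consecutive single-tile slides in direction $d$ produces exactly the same resulting configuration, so the GSTP-reachable set equals the classical reachable set. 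Wilson's theorem then identifies this set (modulo escort position) with the alternating group $A_n$ on the tiles, so any odd-parity goal, e.g.\ the swap of two adjacent tiles, witnesses infeasibility.

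\textbf{Step 2: linear-time feasibility check.} I would fix any canonical escort path of length $O(m_1+m_2)$ in $G$ from $\mathcal S$'s escort cell to $\mathcal G$'s escort cell, simulate it against $\mathcal S$ to obtain an intermediate configuration whose escort sits at the goal cell, and compute the sign of the residual tile permutation $\pi:\{1,\ldots,n\}\to\{1,\ldots,n\}$; by Step 1 the instance is feasible iff $\pi$ is even. The sign of $\pi$ is computable in $O(n)$ time via its cycle decomposition, and the escort-path simulation touches $O(m_1+m_2)=O(n)$ cells, so the whole test is linear.

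\textbf{Step 3: two or more escorts.} For $m_1,m_2\geq 2$ the grid is 2-vertex-connected and is not a simple cycle, and Wilson's theorem then yields the full symmetric group on the tiles in the classical puzzle whenever at least two cells are empty. Because every classical single-tile slide is a legal GSTP move (a lone moving tile has no partner that could trigger CFC), every permutation realizable classically is realizable in GSTP, hence every GSTP instance with $\geq 2$ escorts is feasible. The principal obstacle is Step 1---carefully pinning down that CFC really compels every single-escort synchronous move to be one chain that serializes cleanly to the classical model---after which Wilson's theorem and a routine parity computation finish the argument.
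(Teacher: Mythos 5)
Your proposal is correct and follows essentially the same route as the paper: reduce single-escort \gstp to the classical puzzle by serializing each synchronous step (the paper's ``slowed down'' argument, which you usefully make precise via the \cfc-forced chain structure), then invoke Wilson's alternating-group characterization for infeasibility and the linear-time parity check, and the standard multi-blank corollary for $k\ge 2$. The only nit is that your Step~3 premise ``the grid is not a simple cycle'' fails for the $2\times 2$ grid (a $4$-cycle), though that degenerate case is easily checked directly.
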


Given a feasible $(N^2-1)$-puzzle, each tile can be moved to its goal in $O(N)$ steps since a tile is within $O(N)$ distance to its goal and $O(1)$ steps are needed to switch two tiles. This suggests an $O(N^3)$ algorithm, which readily extends to an $O(m_1m_2\max(m_1,m_2))$ step algorithm on an $m_1\times m_2$ grid. This is also an upper bound for \gstp with a single escort. \gstp with more escorts is studied in the context of automated garages \cite{guo2023efficient}, with results on $\Theta(m_1m_2)$ escorts and $(2m_1 + 2m_2 - 4)$ escorts. To summarize, the following is known: 
\vspace{2mm}

\newcolumntype{Y}{>{\centering\arraybackslash}X}
\noindent
\begin{small}
\begin{tabularx}{\columnwidth}{| c | Y |}
\hline
Number of escorts  &  Makespan upper bound \\ \hline
$1$ & $O(m_1m_2\max(m_1,m_2))$  \\ \hline
$(2m_1+2m_2-4)$ & $O(m_1m_2)$ \\ \hline
$\Theta(m_1m_2)$ & $O(\max(m_1,m_2))$ \\ \hline
\end{tabularx}
\end{small}

It is easy to see that $\Omega(\max(m_1, m_2))$ is a makespan lower bound in expectation. It can be shown that the makespan lower bound is close to $(m_1 + m_2)$ with high probability when there are $\Omega(m_1m_2)$ tiles \cite{guo2022sub15}.

%The \gstp has been previously studied in the context of automated garages \cite{guo2023efficient}, where they showed that 
%\begin{proposition}
%    \gstp on an $m_1 \times m_2$ grid has a $O(m_1 + m_2)$ makespan algorithm where there are $\Theta(m_1 m_2)$ escorts.
%    In addition, when there are $2m_1 + 2m_2 - 4$ escorts, there is a $O(m_1 m_2)$ makespan algorithm.
%    The first algorithm matches the high probability makespan lower bound $\Omega(m_1 + m_2)$ when there are $\Omega(m_1 m_2)$ tiles, whereas the second algorithm is not tight.
%\end{proposition}
%To prove this, they utilized Rubik Tables for their overall strategy and simulated permutations on a given row/column by padding it on both of the long sides with escorts to act as two highways moving in opposite directions.
%When there are $2m_1 + 2m_2 - 4$ escorts, the shuffle applied to the entire grid, and in the case of $\Theta(m_1 m_2)$ escorts, a constant number of instances could be simulated in parallel.

%The lower bounds came from previous work that showed that when there are $\Theta(m_1 m_2)$ tiles, the makespan of random \gstp instances is bounded below by $m_1 + m_2 - o(m_1 + m_2)$ with arbitrarily high probability as $m_1 + m_2 \to \infty$, which establishes the $\Omega(m_1 + m_2)$ lower bound.

\subsection{The Rubik Table Algorithm}\label{sec:rta}
A notable tool, Rubik tables \cite{szegedy2023rubik}, has been applied to derive polynomial-time, $1.x$-optimal solutions to classical MAPF problems on grids \cite{guo2022sub15}. This tool will also be employed in this work. We will use the following theorem with an associated algorithm. 

\begin{theorem}[Rubik Table Algorithm for 2D Grids \cite{szegedy2023rubik}] Let an $m_1\times m_2$ grid be filled with tiles labeled $1, \ldots, m_1m_2$. A row (resp., column) shuffle can arbitrarily permute a row (resp., column) of tiles. Then, the tiles can be rearranged from any configuration to the row-major configuration using $m_1$ row shuffles, followed by $m_2$ column shuffles, and then $m_1$ row shuffles. Alternatively, the tiles can be rearranged using $m_2$ column shuffles, followed by $m_1$ row shuffles, and then another $m_2$ column shuffles. 
\end{theorem}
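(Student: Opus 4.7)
The plan is to prove the theorem constructively in three phases corresponding to the three shuffle rounds, with the heart of the argument being a standard Hall/König-type covering result used to set up the first phase.

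First, I would recast the target: in the row-major configuration, each tile has a designated goal row $r(k) \in \{1,\dots,m_1\}$ and goal column $c(k) \in \{1,\dots,m_2\}$. The aim of Phase 1 (the first $m_1$ row shuffles) is to redistribute the tiles within each row so that, at the end, every column contains exactly one tile whose goal row is $r$, for each $r \in \{1,\dots,m_1\}$. Concretely, for each row $i$, let $R_i$ denote its current multiset of tiles, and build a bipartite multigraph $H$ on vertex sets $\{\text{rows}\} \cup \{\text{goal rows}\}$ where the multiplicity of edge $(i,r)$ is the number of tiles in row $i$ with goal row $r$. Since each row holds $m_2$ tiles and each goal row contains $m_2$ tiles overall, $H$ is $m_2$-regular. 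By König's edge-coloring theorem, $H$ admits a proper edge coloring with $m_2$ colors; interpreting the color of an edge as the column assignment for the corresponding tile yields exactly the desired redistribution. Applying one row shuffle per row realizes this assignment in parallel using $m_1$ row shuffles.

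Next, Phase 2 is immediate: after Phase 1, each column contains exactly one tile for each goal row, so a single permutation within each column places every tile in its correct goal row. This takes $m_2$ column shuffles, one per column, carried out in parallel in the sense of the theorem statement.

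Finally, Phase 3: every tile now sits in its correct row and only needs to be shifted horizontally to its correct column; $m_1$ row shuffles suffice, one per row. Totaling the three phases gives the claimed $m_1 + m_2 + m_1$ shuffle sequence. The symmetric variant (column--row--column) follows by swapping the roles of rows and columns throughout the same argument.

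The main obstacle is Phase 1, specifically ensuring that the row-wise redistribution can be chosen to make every column a transversal of goal rows. I would frame this cleanly as a proper edge coloring of a regular bipartite multigraph and invoke König's theorem; a direct Hall's-theorem induction on $m_2$ also works if one prefers to avoid citing edge coloring. Phases 2 and 3 are then essentially bookkeeping, and I would keep their discussion brief.
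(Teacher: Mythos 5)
Your proof is correct and is essentially the standard argument for the Rubik Table theorem: the paper itself gives no proof (it cites \cite{szegedy2023rubik}), and the cited proof likewise reduces Phase 1 to decomposing the $m_2$-regular bipartite ``type'' multigraph between rows and goal rows into $m_2$ perfect matchings (K\"onig edge coloring, equivalently iterated Hall / Birkhoff--von Neumann), then finishes with the same column-then-row cleanup. Nothing further is needed.
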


Fig.~\ref{fig:rta} illustrates running the Rubik table algorithm over a $4\times 3$ grid, using a row-column-row shuffle sequence. 

\begin{figure}[h]
    \centering
    \includegraphics[width=\columnwidth]{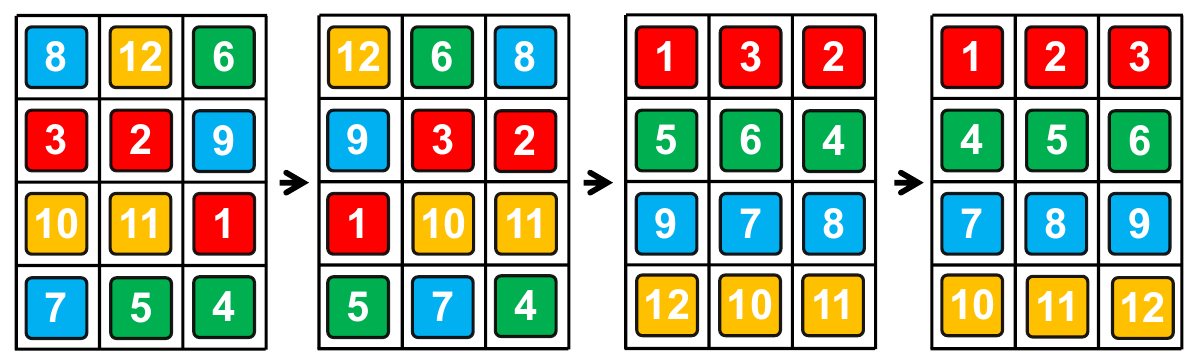}
    \caption{Applying the Rubik table algorithm to rearrange tiles on a $4\times 3$ grid using a sequence of row shuffles, followed by column shuffles, followed by row shuffles.}
    \label{fig:rta}
\end{figure}

\section{Intractability of \mogstp}\label{sec:complexity}
We proceed in this section to establish the NP-completeness of \mogstp on square grids, which will show  

\begin{theorem}\label{t:nph}
\mogstp is NP-complete, with or without an enclosing grid. 
\end{theorem}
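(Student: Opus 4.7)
The plan is to prove NP-completeness in two parts. For membership in NP, I would note that the upper bounds cited in Section 3.3 guarantee that every solvable \gstp instance admits a solution of makespan polynomial in $|G|$, so without loss of generality the input $K$ is polynomially bounded. A certificate consisting of the explicit path set $P$ then has polynomial size, and verifying the five feasibility constraints (continuous motion, completion, meet-collision, head-on collision, and CFC) reduces to local checks at each time step, which can be carried out in polynomial time.

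For NP-hardness, I would reduce from \ttfsat (introduced in Section 3.2), following the high-level blueprint of Ratner and Warmuth but adapted to the makespan objective and to the CFC. The construction embeds a given formula into a square grid via three classes of gadgets: \emph{variable gadgets}, which route a distinguished tile through one of two channels to encode a true/false assignment; \emph{clause gadgets}, which are local regions where four literal-tiles must synchronize so that a properly timed schedule exists only when exactly two literals are true; and \emph{routing corridors} containing auxiliary tiles whose motion must be in lockstep with the chosen variable assignment. The target makespan $K$ is calibrated so that, if the formula is satisfiable, there is a simultaneous schedule in which every tile reaches its goal in exactly $K$ steps, whereas any violation forces at least one tile into a detour of length strictly greater than $K$. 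The CFC is exploited here to prevent ``trick'' perpendicular adjacent motions that might otherwise allow a tile to slip through an unsatisfied clause.

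The chief obstacle is establishing the rigidity of the construction: one must prove that no clever synchronous multi-tile motion can simultaneously meet every tile's makespan bound unless the underlying variable assignment is satisfying. This requires a careful case analysis of admissible motions inside each gadget and across gadget interfaces, leveraging CFC to rule out lateral escapes and shared-escort shortcuts. To handle the ``with or without an enclosing grid'' variant, I would complete the reduction first with a square enclosure padded by filler tiles, and then observe that the same gadgets remain correct when the enclosing boundary is replaced by a surrounding ring of immovable anchor tiles around each gadget, so that escorts remain confined to the intended regions in either setting. Extending from a constant number of escorts to $\lfloor |G|^\epsilon \rfloor$ escorts (as claimed in the contributions) should follow by inflating the grid with additional inert padding that absorbs the extra escorts without altering the gadget dynamics.
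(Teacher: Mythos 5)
Your overall strategy (reduction from \ttfsat with variable and clause gadgets, plus a polynomial-size certificate for NP membership) matches the paper's, and your NP-membership argument is fine. But the hardness half has a genuine gap at exactly the point you flag as ``the chief obstacle'': you defer the rigidity argument, and the one concrete mechanism you propose for enforcing it --- ``a surrounding ring of immovable anchor tiles'' confining escorts to gadget regions --- is not available in \gstp. There are no obstacles in this model; every cell holds either a tile or an escort, and a tile whose goal equals its start has slack within the makespan budget $K$, so it can wander and return. You cannot simply declare tiles immovable, and without some way to pin down the auxiliary tiles' trajectories the ``careful case analysis of admissible motions'' you invoke has no foothold. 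The paper's key structural idea, which your proposal is missing, is to give every auxiliary tile a goal exactly $K$ cells away in a fixed direction (up or down), so that in any makespan-$K$ solution its entire trajectory is \emph{uniquely forced}: it must move one step in that direction at every time step. This produces an upward-moving ``train'' of gadget cars flanked by downward-moving ``rails'' with small gaps, and all the gadget logic (literal tiles exiting variable cars and entering clause cars through those gaps, with \cfc restricting how they squeeze through) is carried by the relative motion of these forced tiles. This same device is what makes the ``without an enclosing grid'' variant work --- the construction never relies on walls at all.

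Two further points where your sketch diverges in ways that would need repair. First, your variable gadget routes ``a distinguished tile through one of two channels,'' but \ttfsat requires each variable to contribute four literal occurrences (two positive, two negative); the paper's variable car carries four literal tiles and forces the positive pair to exit to one side of the rails and the negative pair to the other, which is what makes the assignment consistent across all occurrences. Second, your clause gadget enforces ``exactly two true literals'' by a local timing argument, which is vague; the paper instead enforces ``at most two literals enter from each side'' locally (via the geometry of the clause-entrance gaps) and obtains ``exactly two'' globally by counting: $2n$ literal tiles end up on each side of the rails and $n$ clause cars each accept at most two per side, so every clause must receive exactly two from each side. Your extension to $\lfloor |G|^{\epsilon}\rfloor$ escorts by inert padding does match the paper, modulo the caveat that the padding tiles are stationary-by-construction rather than immovable-by-fiat.
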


First, we sketch the proof to provide key ideas behind the reduction of hardness. Then, detailed constructions of the required gadgets and the full instance construction follow. 

\subsection{Proof Outline}
We prove via a reduction from \ttfsat \cite{RATNER1990111} defined in Sec.~\ref{sec:224}. 
Our reduction constructs an \mogstp instance to force a flow of literal tiles from variable gadgets to clause gadgets in matching pairs, forming a truth side of literals and a false side of literals (realized through a \emph{gadget train}, see Fig.~\ref{fig:nph-sketch}(a) for a sketch and explanation). For each variable $x_i$, $1\le i\le n$, there are four sliding tiles labeled $x_i^1,x_i^2,\bar x_i^1,\bar x_i^2$ that correspond to the four literals for $x_i$, the first pair positive and the second pair negative.
When the context is clear, we simply say \emph{literals} instead of \emph{literal tiles}. A variable gadget (see Fig.~\ref{fig:nph-sketch}(b) and Fig.~\ref{fig:vg}) is constructed that forces the pair of unnegated literals (e.g., $x_i^1$ and $x_i^2$, ``+'' tiles in the figure) to only exit together from one side of the gadget (e.g., left) while forcing the pair of negated literal (e.g., $\bar x_i^1$ and $\bar x_i^2$, , ``-'' tiles in the figure) to exit together from the opposite side, each passing through limited openings of the \emph{rails} that flank the train and move in the opposite direction.
After all $4n$ literals exit from the $n$ variable gadgets, there are $2n$ each on the left and right side of the rails. These literals are then routed into clause gadgets (see Fig.~\ref{fig:nph-sketch}(c) and Fig.~\ref{fig:cg}), each allowing at most two literals to enter from each side.
The overall \mogstp instance is constructed such that if the \ttfsat is satisfiable, then in the \mogstp, the $2n$ literal tiles that move to the left side of the train can be chosen to be the true literals in the given truth assignment, and so all literal tiles can then be readily routed to the clause gadgets. Similarly, in the other direction of the reduction, because exactly $n$ pairs of literal tiles must be on the left side in a makespan-optimal solution, the corresponding $2n$ literals can be set to positive to satisfy the \ttfsat instance. 
\begin{figure}[h]
    \centering
    \begin{overpic}
    [width=\columnwidth,tics=5]{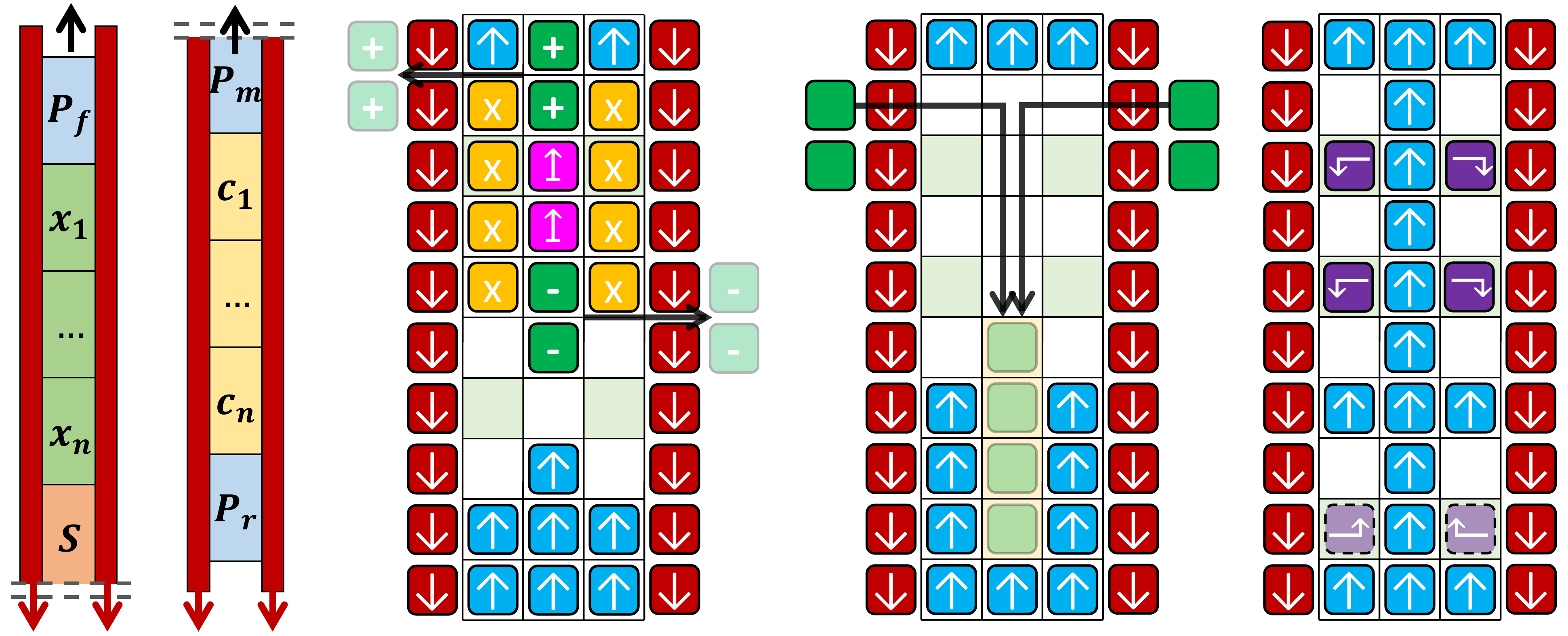}
    %{figs/MOGSTP Outline assembled.png}
    \put(7.8, -3){{\small (a)}}
    \put(33, -3){{\small (b)}}
    \put(62.5, -3){{\small (c)}}
    \put(88.5, -3){{\small (d)}}
    \end{overpic}
    \caption{Pieces of \mogstp. (a) Sketch of the train-like \mogstp instance split into two halves. The upward-moving gadget train is surrounded by two (red) rails of tiles that move strictly downwards, with a few gaps (not shown here, see Fig.~\ref{fig:rails}) to allow tiles to exit/enter.  
    The train, from top to bottom, contains a front padding car $P_f$, variable cars $x_1, \ldots, x_n$, a security car $S$, a middle padding car $P_m$, clause cars $c_1, \ldots, c_n$, and a rear padding car $P_r$. (b) A variable gadget (center $10\times 3$ portion) is constructed to force unnegated (``+'')  and negated (``-'') literal tiles to exit from different sides. The exited titles will be outside the rails. (c) A clause gadget is constructed to allow at most two literals to come in from each side of the rails. (d) The security car where the upper four purple tiles will exit to block variable exits on the rails (see Fig.~\ref{fig:rails}). The lower two light purple blocks are goals for two tiles initially on the rails (Fig.~\ref{fig:rails}).} 
    \label{fig:nph-sketch}
\end{figure}
%- Explain the proof at a high level, briefly explaining the reductions in both ways making it clear that the separation of variables positive/negative pairs is important. While doing this, use figures showing the overall constructed instance and the main phases (beginning, variable exiting/splitting, entering clauses)

\subsection{Gadgets}
Our gadgets consist of \emph{preset tiles} that move in a fixed direction throughout the solution routing process. The \emph{up} (resp., \emph{down}) tiles move one step up (resp., down) at each time step, which can be forced by setting their goals a distance upwards (resp. downwards) equal to the given makespan of the \mogstp instance.

\subsubsection{Rail (Gadget)} A \mogstp instance contains two symmetric rails (red strips on the two sides in Fig.~\ref{fig:nph-sketch}, with more details in Fig.~\ref{fig:rails}) consisting of down tiles with gaps, which are $3 \times 1$ blocks of escorts.
Each rail contains three gaps, separated into two groups: two lower gaps are designated as \emph{variable exits}, and a single upper gap functions as a \emph{clause entrance}.
A down tile separates the variable exits. The four purple tiles from the security car will enter the middle of these gaps and then move with the rails until the end. There are two (purple) tiles initially in the entrance gaps that will later enter the security car. These gaps will be explained in more detail. 

\begin{figure}[h]
    \centering
    \begin{overpic}
    [width=\columnwidth]{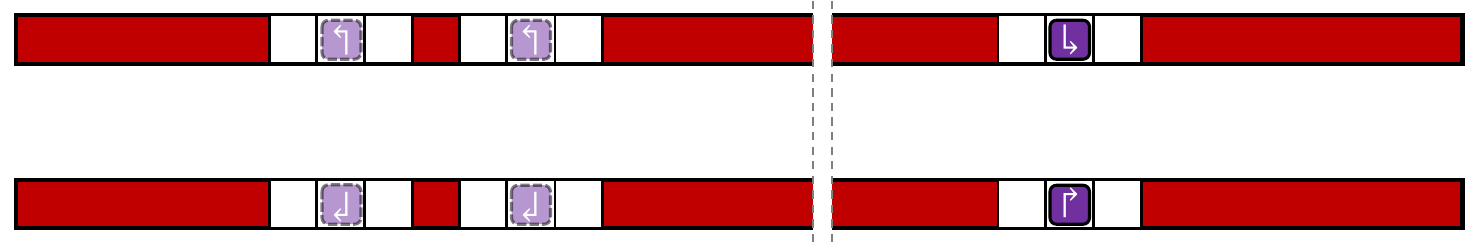}    %{figs/MOGSTP Outline assembled.png}
    \put(17, 7.5){{\small variable exit gaps }}
    \put(57, 7.5){{\small clause entrance gaps }}
    \end{overpic}
    \caption{Part of the rails, rotated $90$ degrees clockwise from Fig.~\ref{fig:nph-sketch}(a), showing the variable exit and clause entrance gaps.}
    \label{fig:rails}
\end{figure}

%Besides the downward-going rails, gadgets will take the form of \emph{cars} that move upwards. 
%which will be flanked by nearly-infinite downwards moving symmetric sliding doors built from $T_D$'s except with a few small openings, or contiguous blocks of 3 escorts, to allow for tiles from the train to enter and exit.
\subsubsection{Variable Car (Gadget)} 
A variable car ($x_1, \ldots, x_n$ blocks in Fig.~\ref{fig:nph-sketch}) is an upwards moving $10 \times 3$ block whose start configuration is shown in Fig.~\ref{fig:nph-sketch}(b). For the variable car corresponding to $x_i$, besides the (blue) up tiles as marked,  there are two unnegated literal tiles (the two ``+" tiles) corresponding to $x_i^1$ and $x_i^2$, and two negated literal tiles (the two ``-" tiles) corresponding to $\bar x_i^1$ and $\bar x_i^2$. These tiles must be moved to some clause cars to be introduced shortly. 
There are also (pink) \emph{single-delay} up tiles that must pause in place exactly once throughout the execution of the \mogstp instance. 
Additionally, there are eight obstacle tiles (the x tiles) whose goal configurations are three spots lower within the same variable car. These obstacle tiles help ensure that the pairs of positive and negative literals split up onto different sides.

\begin{figure}[h]
    \centering
    \includegraphics[width=\columnwidth]{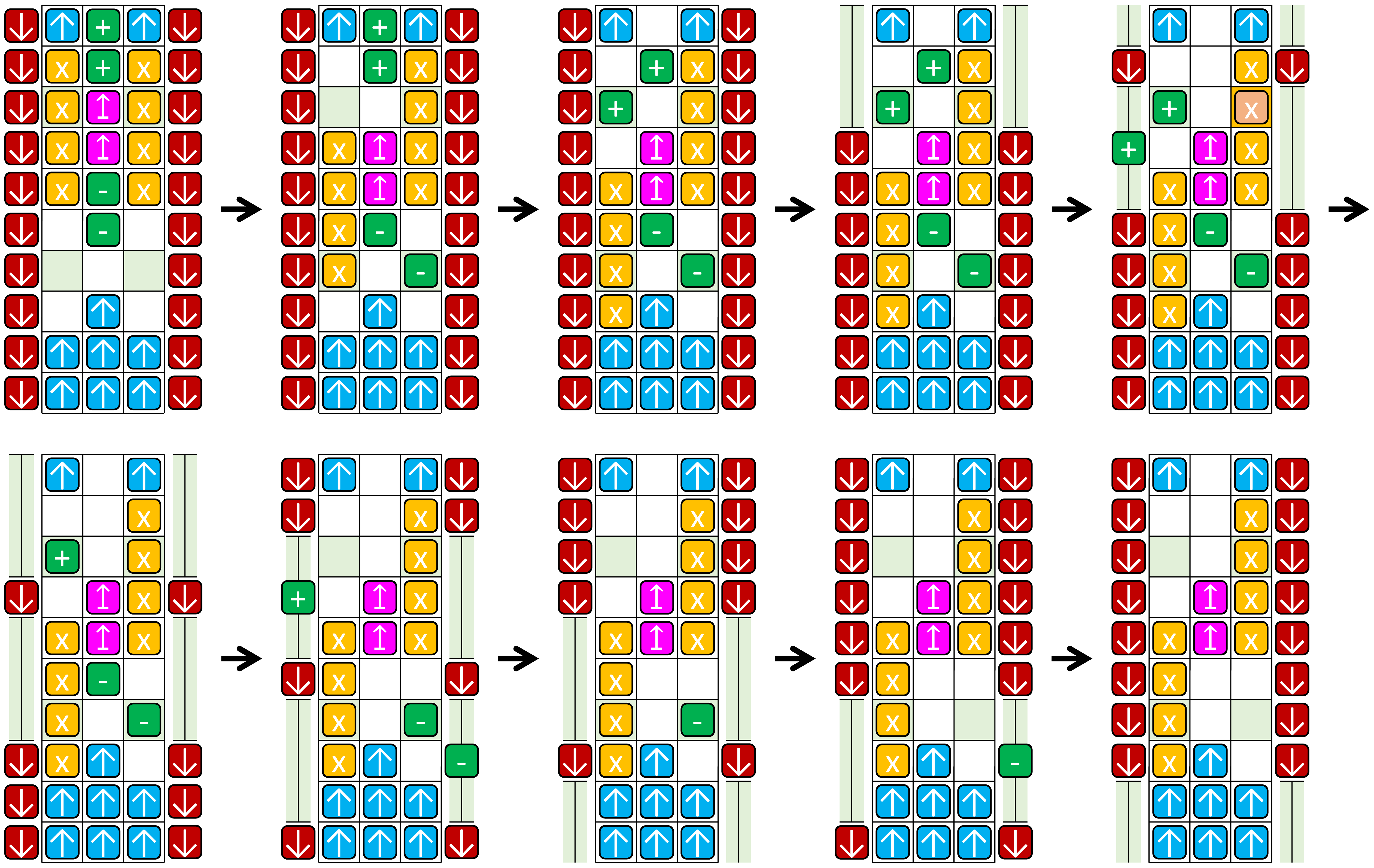}
    \caption{Illustration of how (green) literal tiles may exit a variable gadget in pairs. The bottom left subfigure shows the four lower gaps on the rails, each a $3 \times 1$ block.}
    \label{fig:vg}
\end{figure}

\begin{lemma}
    As a variable car passes by the variable exits on the rails, the positive and negative literals can only exit to different sides of the rails.
\end{lemma}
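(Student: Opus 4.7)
The plan is to work in the reference frame of the variable car and combine a timing/alignment argument with the corner-following constraint (CFC) and the obstacle-tile goals, ruling out every exit pattern other than the opposite-side split. In the car's frame the rails translate downward at relative speed~$2$ (the car moves up by one cell per step and the rails move down by one cell per step). Since each variable-exit gap is a $3\times 1$ block of escorts, the window during which a given row of the car aligns with a given gap spans only a constant number of steps. Reading off the layout in Fig.~\ref{fig:vg}, the two ``$+$'' tiles sit in one row of the car and the two ``$-$'' tiles sit in a different row, so the ``$+$'' and ``$-$'' pairs have disjoint alignment windows: each sign-pair is physically forced to use one of a dedicated pair of gaps (one on each side of the rails).

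Next I would check the sideways exit move itself against CFC. Every preset tile surrounding a literal at the moment of exit---the rail's down-tiles, the car's up-tiles, and the pink single-delay up-tiles---moves vertically, while the exiting literal moves horizontally; vertical and horizontal are perpendicular, so CFC forbids a literal from sliding into a cell that was just vacated by a vertically moving preset tile. The only escape is for a horizontally moving neighbor in the same row to be available, which in this gadget means the matched literal of the same sign must exit simultaneously through the same $3\times 1$ gap. This reduces the choice to a binary one for each sign-pair: exit left or exit right.

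Finally I would use the eight obstacle (x) tiles, whose goals are three cells lower within the same car. An obstacle tile on a given side can descend only by following escorts created when a literal on that side vacates its cell toward the rails; without such an escort, its three-cell descent cannot complete in the available time window. Suppose for contradiction that both sign-pairs exited through the same side. Then on the opposite side no literal ever leaves, no escorts are generated there, and the obstacle tiles banked on that side fail to reach their goals---contradicting the completion constraint. Hence the ``$+$'' pair and the ``$-$'' pair must exit through opposite sides. The main technical obstacle will be the careful bookkeeping of alignment windows and the CFC-induced pair-exit rule; once these two ingredients are locked down, the opposite-side split falls out of a short case check leveraging the left--right symmetry of the gadget.
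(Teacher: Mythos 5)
Your plan is in the right spirit---the obstacle tiles and their three-cell-lower goals are indeed the mechanism that forces the split---but it is missing the paper's central technical step and substitutes two claims that do not hold as stated. The paper's proof hinges on first showing that \emph{obstacle tiles cannot change columns}: an obstacle tile has a budget of only three non-upward moves, so if it steps sideways out of the car, \cfc forces it to step straight back (the cell it vacates must remain empty so it can return immediately), wasting two moves without enabling any other motion, and the single-delay up tiles block it from reaching the middle column. Only with this column-invariance in hand can one conclude that the unnegated (resp.\ negated) literals are confined to exiting from row $3$ (resp.\ row $7$), and that an obstacle column can descend only if a literal on \emph{its own} side vacates a cell. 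Your third paragraph asserts exactly this last dependence (``an obstacle tile on a given side can descend only by following escorts created when a literal on that side vacates its cell'') but gives no reason an obstacle tile could not instead reach its goal by detouring through the middle column or outside the car; that is precisely what must be ruled out, and it is the linchpin of the argument.

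Second, your \cfc-based ``simultaneous pair exit'' claim does not follow. A literal exits into a variable-exit gap, which is a $3\times 1$ block of \emph{escorts}; the destination cell is empty, so the corner-following constraint (which only restricts moving into a cell currently occupied by another tile) imposes nothing on that move, and a single literal can perfectly well slide out alone. The reason the two literals of one sign end up on the same side is not a \cfc forcing of simultaneous motion but the asymmetry induced in the obstacle tiles once the first literal of that sign exits from its forced row: after that, only the opposite side's obstacle configuration can admit the other sign-pair. Without the column-invariance lemma your case analysis cannot exclude a sign-pair splitting across the two sides, and the contradiction you do derive (all four literals on one side strand the far-side obstacles) rules out only one of the bad configurations, not all of them.
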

\begin{proof}[Proof Sketch]
Only literal and obstacle tiles may move outside a variable car (the $10\times 3$ grid). It can be shown that obstacle tiles should not change columns.
Because of this, unnegated (resp., negated) literals can only exit from the $3$th (resp., $7$th) row. This forces the obstacle tiles to become asymmetric on the two sides of a variable car, resulting in the unnegated literals exiting from one side of the car and the negated literals exiting from the opposite side. One such exit sequence is illustrated in Fig.~\ref{fig:vg}. 
\end{proof}

\subsubsection{Clause Car (Gadget)}
As shown in Fig.~\ref{fig:nph-sketch}(c), the clause car is an upward-moving $10 \times 3$ subgrid entirely composed of up tiles and requires $4$ literal tiles corresponding to $c_i$ in the goal configuration. With two symmetric $3\times 1$ gaps on the rail, it is clear that at most two literals can enter from each side, as shown in Fig.~\ref{fig:cg}.
\begin{figure}[h]
    \centering
    \includegraphics[width=\columnwidth]{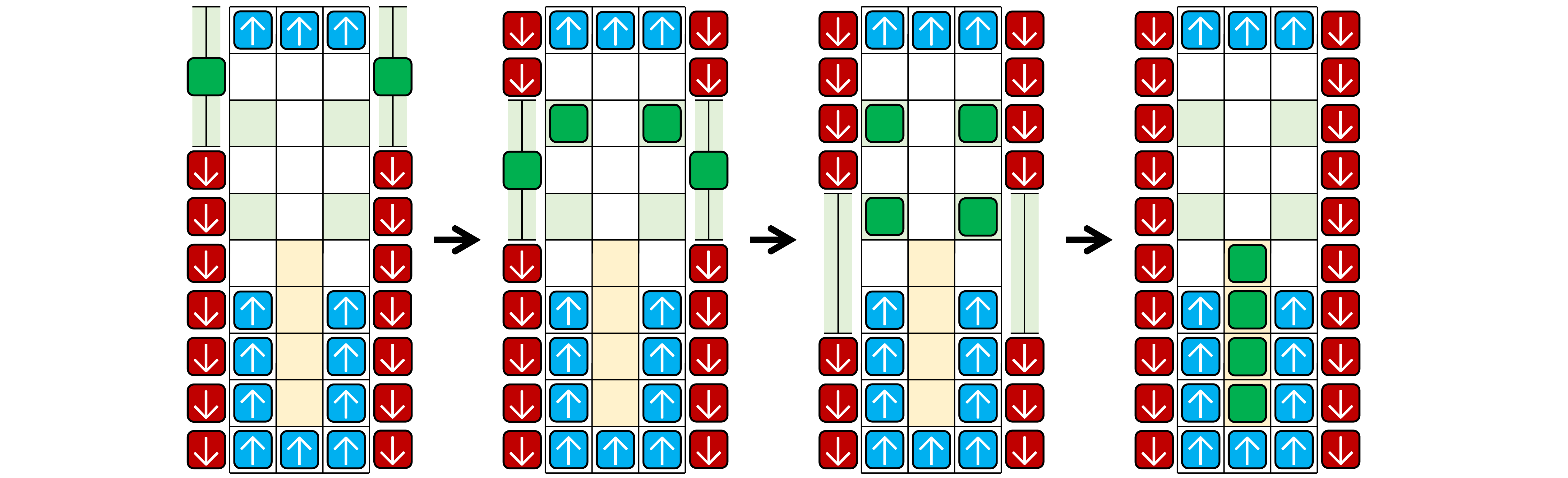}
    \caption{Green literal tiles entering a clause car gadget.}
    \label{fig:cg}
\end{figure}

\subsubsection{Security Car (Gadget)}
Shown in Fig.~\ref{fig:nph-sketch}(d), the security car is an upward-moving $10 \times 3$ subgrid whose frame consists of up tiles with four additional purple tiles at rows $3$ and $5$ that need to be injected into the middle of the four variable exits on the rails (see Fig.~\ref{fig:rails} for reference) as they pass by. This prevents these gaps from being used by clause gadgets. It will receive the two purple tiles that are initially inside the clause entrances (see Fig.~\ref{fig:rails}) to go to row $8$.

\subsection{Complete Specification and Reductions}
We construct the \mogstp instance as follows. Let $d = 24n + 38$; our upwards moving train is a $d \times 3$ block; from top to bottom, the three padding cards have $4$, $4n$, and $24$ rows of up tiles, respectively. The middle padding car allows the literal tiles to reorder before entering the clause cars. Initially, the bottoms of the rails are aligned with the bottom of the gadget train. 
The \emph{variable exit} openings occupy rows $d+2$ to $d+4$ and $d+6$ to $d+8$ from the bottom. The \emph{clause entrance} opening occupy rows $d+14n+12$ to $d+14n + 14$.
While a grid is not needed, we can select our grid $G$ to be of size $4d \times 4d$ with the construct positioned in the middle horizontally, with the bottom of the construct starting at the $(d+1)$th row from the bottom of $G$.
The makespan bound $K$ is set to $d$. The \mogstp instance is fully specified.

\begin{proof}[Proof of Thm.~\ref{t:nph}]
If the \ttfsat instance is satisfiable, we can select the positive (resp., negative) literals to exit to the left (resp., right) of the rails from the variable cars, when they pass by the literal exit gaps. Then, these literals can reorder and enter into the clause gadgets as required, reaching the target goal configuration with a makespan of $d$. 
% this seems too shortened and redundant with the previous proof outline, not giving any more insight; there is no mention of how the literal tiles sort, although it's not a key idea in the proof

Similarly, in the other direction, if the \mogstp instance has a solution with a makespan of $d$, then the up/down tiles must move uninterrupted. In this case, four literals must exit a variable car in pairs of the same truth value to different sides. Subsequently, these literals reorder and enter the clause cars as described. Therefore, we can pick literal tiles on one side of the rails, e.g., left, and make their corresponding literals positive, ensuring all clauses are true. This yields a satisfying assignment for the \ttfsat instance. 

\mogstp is in NP since the existence of a solution can be readily checked, and a feasible solution can be computed in polynomial time similar to how $(N^2-1)$-puzzles are solved. Thus, \mogstp is NP-complete.
\end{proof}

$\mogstp$ remains NP-hard when we specify that there are exactly $\lfloor |G|^\epsilon \rfloor$, $0< \epsilon < 1$ escorts (where $|G|$ is the number of cells of the grid) by blowing up the grid by a polynomial amount and filling the extra space with stationary tiles to achieve the desired number of escorts. Then note that \ttfsat is still simulated through the movement of the literal tiles around the preset tiles, and in addition, a solution routing can be constructed in the same manner from a truth assignment.

\section{Tighter Makespan Lower \& Upper Bounds}\label{sec:bounds}
In this section, we first establish a tighter makespan lower bound as a function of the number of escorts. Then, we proceed to the more involved efforts of deriving tighter makespan upper bounds again as a function of the available number of escorts. The new and tighter lower and upper bounds are summarized in the table below. We further provide an exact constant for all upper bounds as a more precise characterization. In all cases, our upper and lower bounds match asymptotically, eliminating the gaps left by previous studies on \gstp. All upper bounds come with low-polynomial-time algorithms for computing the actual plan, which is clear from the corresponding proofs. 

\begin{table}[h!]
\begin{small}
\begin{tabularx}{\columnwidth}{| c | Y |}
\hline
$k$, the number of escorts  &  Makespan \textbf{lower} bound \\ \hline
$k < \min(m_1, m_2)$ & exp. $\Omega(\dfrac{m_1 m_2}{k})$ \\ \hline
$k \ge \min(m_1, m_2)$ & h.p. $\Omega(\max(m_1, m_2))$ \\ \hline  \hline
$k$, the number of escorts  &  Makespan \textbf{upper} bound \\ \hline
$k = 1$ & $(81 + o(1))m_1m_2$ \\ \hline
$k = 2$ & $(18 + o(1))m_1m_2$ \\ \hline
$2 < k < \min(m_1, m_2)$ & $(22 + o(1))\dfrac{m_1m_2}{\lfloor k/2\rfloor}$ \\ \hline
%$\min(m_1, m_2) \le k < m_1 + m_2 - 1 $ & \textcolor{red}{???} \\ \hline
% k $\min(m_1, m_2) \leq  < m_1 + m_2 - 1$ & $34\max(m_1, m_2)$ \\ \hline
$k \geq m_1 + m_2 - 1$ & $34\max(m_1, m_2)$ \\ \hline
\end{tabularx}
\end{small}
    \caption{Our matching makespan lower and upper bounds.}
    \label{tab:bounds}
\end{table}
For convenience, instead of viewing \gstp through batched tile movements, we focus on the movement of the escorts, which encodes tile motion more concisely.
A straight contiguous train of tiles moving in a single step may be equivalently viewed as a \emph{jump} of an escort. 
Since the new escort position must remain in the same row or column, we call the jump a \emph{row jump} or \emph{column jump}, respectively.
In addition, we use \emph{rectangular shift} or \emph{r-shift}, as a fundamental motion primitive in which the escort cycles through the four corners of a rectangle, thus shifting all boundary elements by one tile in the opposite direction. 
We call the rectangular shift \emph{cwr-shift} (resp., \emph{ccwr-shift}) if the escort  traverses the corners in the counterclockwise (resp., clockwise) direction. 

\subsection{Tighter Makespan Lower Bounds}
Using escort jumps instead of tile moves lets us see immediately that a single time step can only change the sum of the Manhattan distances by $k \max(m_1, m_2)$, where $k$ is the number of escorts. The observation readily leads to a tighter makespan lower bound than the previously established $\Omega(m_1+m_2)$ (or $\Omega(\max(m_1,m_2))$).

\begin{lemma}
%[Expected Makespan Lower Bound for \gstp]
    The expected minimum makespan for \gstp on $m_1 \times m_2$ grids with $k$ escorts is $\Omega(\frac{m_1 m_2}{k})$.
\end{lemma}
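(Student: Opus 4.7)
My plan is to combine the escort-jump viewpoint introduced just above with a potential-function argument using the total Manhattan distance of tiles from their goals. The key structural observation is that each time step of any feasible \gstp solution decomposes into at most $k$ synchronous straight trains of tiles: \cfc prohibits any in-step corner-turn (the head of a moving chain must enter a cell whose previous occupant moved in the same direction), so every non-stationary tile belongs to a maximal straight chain whose head enters an escort, and the no-meet-collision rule forces distinct chains to consume distinct escorts. Since each chain has length at most $\max(m_1,m_2) - 1$, at most $k(\max(m_1,m_2) - 1)$ tiles move in any single step.

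Define the potential $D(t) := \sum_{i=1}^n \|p_i(t) - g_i\|_1$. Each tile move changes its own summand by exactly $\pm 1$, so the chain count above yields the per-step drift inequality
\[
D(t) - D(t+1) \;\le\; k\,\max(m_1,m_2),
\]
and iterating gives $T_P \ge D(0)/(k\max(m_1,m_2))$ deterministically for every feasible path set.

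Next, I would lower bound $\mathbb{E}[D(0)]$ by linearity of expectation. When $\mathcal S$ and $\mathcal G$ are drawn independently as uniformly random injective placements, each $s_i$ and each $g_i$ is uniform on $V$ by symmetry, and $s_i$ is independent of $g_i$ across the two configurations. Using the standard identity $\mathbb{E}|X_1 - X_2| = (m^2-1)/(3m) = \Theta(m)$ for $X_1, X_2$ i.i.d.\ uniform on $\{1,\ldots,m\}$, we obtain $\mathbb{E}[\|s_i - g_i\|_1] = \Theta(m_1 + m_2) = \Theta(\max(m_1,m_2))$. In the relevant regime $k < \min(m_1,m_2) \le \sqrt{m_1 m_2}$, so $n = m_1 m_2 - k = \Theta(m_1 m_2)$, and summing over $i$ gives $\mathbb{E}[D(0)] = \Theta(m_1 m_2 \max(m_1,m_2))$. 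Dividing by the drift budget produces $\mathbb{E}[T_P] = \Omega(m_1 m_2 / k)$ as claimed (note that the divisor $k\max(m_1,m_2)$ is deterministic, so the expectation passes through cleanly).

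The only place where real care is needed is justifying the chain decomposition underlying the drift bound: this is where \cfc does the work, and I would write out a short sub-lemma showing that every synchronous move set decomposes into at most $k$ disjoint straight chains, each terminating at a distinct escort. Everything else is routine---linearity of expectation and a one-line computation of $\mathbb{E}|X_1-X_2|$---so I do not anticipate a genuine obstacle beyond stating the chain-decomposition sub-lemma crisply.
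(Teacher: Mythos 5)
Your proposal is correct and follows essentially the same route as the paper: the same Manhattan-distance potential, the same per-step bound of $k\max(m_1,m_2)$ obtained from the escort-jump (straight-chain) viewpoint, and the same linearity-of-expectation computation of $\mathbb{E}[D(0)]$. Your explicit chain-decomposition sub-lemma is just a more careful justification of the paper's one-line claim that each escort jump alters the potential by at most $\max(m_1,m_2)$.
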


\begin{proof}
Consider the sum of Manhattan distances $S$ of each tile's start and goal positions.
Over all possible start and goal configurations, each tile is expected to have a Manhattan distance of $\Omega(m_1 + m_2) = \Omega(\max(m_1, m_2))$ \cite{santalo2004integral}. 
Because there are $m_1m_2 - k$ tiles, we have $S = \Omega((m_1m_2 - k)\max(m_1, m_2))$, in expectation.
Because each of the $k$ escorts can jump a distance of $\max(m_1, m_2)$ within the same row/column, altering the Manhattan distance contribution by $1$ for each tile in its jump path, in a single time step, $S$ can only change by at most $k\max(m_1, m_2)$.
Thus, at least $\frac{S}{k\max(m_1, m_2)}= \Omega(\frac{m_1 m_2}{k})$ steps are needed to solve the instance in expectation. %\textcolor{red}{Should this be made more rigorous? Computations of sum of Manhattan distances overall are not too difficult (last meeting - results exist and could be referenced). Also, can we get a high probability lower bound from random variables??? This seems reasonable$\ldots$}
\end{proof}
Combining with the previously known lower bounds yields a tighter makespan lower bound of $\Omega(\frac{m_1 m_2}{k})$ for $k \leq \min(m_1, m_2)$ and $\Omega(\max(m_1, m_2))$ for $k \geq \min(m_1, m_2)$, which match our developed upper bounds, established next. 

\subsection{Tighter Makespan Upper Bounds: Outline}
We leverage RTA (Sec.~\ref{sec:rta}) to establish tighter makespan upper bounds for \gstp. Each round of row/column shuffles in RTA can be executed in parallel, potentially leading to a significantly reduced makespan. However, the shuffles do not readily translate to feasible sliding-tile motion; performing in-place permutation of tiles in a single row/column is impossible. 
To enable the application of RTA, instead working with one grid row/column, we simulate row/column shuffles by grouping multiple rows or columns together. 
Therefore, at a high level, we derive better upper bounds by:
\begin{itemize}
    \item Applying RTA to obtain three batches of row or column shuffles (see, e.g., Fig.~\ref{fig:rta}) with escorts treated as labeled tiles. Each batch of shuffles will be executed to completion (via simulations according to rules of \gstp) before the next batch is started. 
    \item In a given batch of row/column shuffles, adjacent rows/columns will be grouped together (e.g., two or three rows per group), on which tile-sliding motions will be planned to realize the desired shuffles. 
\end{itemize}

Performing efficient tile-sliding motions with the \cfc constraint is key to establishing tighter upper bounds. We first describe subroutines for solving \gstp with $1$ or $2$ escorts on $3\times m$ and $2\times m$ grids. These subroutines will then be used to solve general \gstp instances. 

%
%While we spend significant effort pushing lower the constant factors for the makespan upper bounds, many details will be omitted due to limited space. We focus instead on ensuring the proofs are convincingly correct. Detailed arguments for the constant factors will be included in an extended version of the manuscript. 

\subsection{Upper Bounds for 2-3 Rows with 1-2 Escorts}
Our \gstp algorithms will build on subroutines for sorting multiple rows. We first prove such a routine on $3 \times m$ grids. 

\begin{lemma}\label{l:3m}%[Linear $3\times m$ Sort]
Feasible \gstp instances with a single escort on a $3 \times m$ grid can be solved in $120m$ steps.
\end{lemma}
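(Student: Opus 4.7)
The plan is to apply the Rubik Table Algorithm (Sec.~\ref{sec:rta}) with $m_1 = 3$ and $m_2 = m$, yielding a schedule of three row shuffles, then $m$ column shuffles, then three more row shuffles. The goal is to show that each row shuffle can be realized in $O(m)$ \gstp steps, and that the $m$ column shuffles together take $O(m)$ steps, with constants arranged so the total is at most $120m$. Since the instance is feasible by hypothesis, the parity obstruction from Sec.~\ref{sec:prelim} does not arise, so the RTA-prescribed permutations are jointly realizable end-to-end.

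For a single row shuffle I would show how to arbitrarily permute the $m$ tiles of a chosen row using the other two rows as scratch space that is restored by the end of the phase. The building blocks are row/column escort jumps and rectangular shifts on $2\times 2$ and $2\times 3$ sub-blocks, each r-shift implemented as a fixed-length sequence of single-step escort moves so that \cfc is respected automatically. Chaining these gives an ``extract-transport-replace'' gadget that transposes any two tiles in the target row in a constant number of steps while restoring the scratch row. Since an arbitrary permutation of $m$ elements factors into $O(m)$ (possibly long-range) transpositions, executing them serially realizes one row shuffle in $O(m)$ steps.

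For the $m$ column shuffles, each column holds only three tiles, so any in-column permutation is implementable with a constant number of escort moves once the escort is positioned inside the column, and a single row jump suffices to relocate the escort between consecutive columns. The entire column-shuffle phase therefore finishes in $O(m)$ steps. Budgeting roughly $18m$ per row shuffle and about $12m$ across all column shuffles yields $6\cdot 18m + 12m = 120m$ as a conservative total.

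The principal obstacle is the corner-following constraint. Every single-step batched motion must be a uniform straight-line train, which precludes the simultaneous perpendicular tile moves that a naive r-shift would attempt; each r-shift must therefore be decomposed into sequential single-step escort moves, and the extract-transport-replace gadget must be laid out so that whenever two tile moves coincide in one time step they share a common axis and direction. A secondary subtlety is escort-position bookkeeping between phases: the escort must arrive at each subsequent row or column without disturbing already-committed content, which inflates the per-phase constant slightly but remains $O(m)$ per row-shuffle phase and $O(1)$ per column. Verifying \cfc-compliance for each gadget and tallying its exact step cost are the most delicate parts of turning the sketch into the stated $120m$ bound.
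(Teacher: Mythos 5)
Your overall architecture---running RTA on the $3\times m$ grid itself, with three row-shuffle phases sandwiching one column-shuffle phase---is genuinely different from the paper's, which never invokes RTA here: it instead sorts the rightmost third of the grid directly by treating the boundary cells as a clockwise ``circular highway'' and the middle row as a workspace, inserting the target tiles into their correct relative order via a linear number of r-shifts, and then recurses on the remaining $3\times\frac{2}{3}m$ subgrid so that the costs form a geometric series summing to $O(m)$. The architectural difference would be fine if your row-shuffle subroutine delivered $O(m)$ steps, but it does not.

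The gap is your claim that an ``extract-transport-replace'' gadget ``transposes any two tiles in the target row in a constant number of steps.'' Under the \gstp motion model a tile advances at most one cell per time step (an escort jump moves every tile in its train by exactly one cell), so exchanging two tiles at distance $d$ in the row requires $\Omega(d)$ steps no matter how the gadget is laid out. Consequently, realizing an arbitrary row permutation by $O(m)$ serial transpositions costs $O(m^2)$ steps in the worst case (and decomposing into adjacent transpositions instead requires $\Theta(m^2)$ of them), so your per-row budget of $18m$ and the resulting $120m$ total are unsupported. Obtaining a true $O(m)$-step permutation of $m$ tiles with a single escort is exactly the crux of this lemma; it requires a scheme in which transport cost is amortized globally rather than paid per transposition---this is what the paper's highway rotation achieves, since each r-shift simultaneously advances all boundary tiles by one position while one target tile is inserted into its correct relative slot, so no tile ever needs more than one lap. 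A secondary, smaller issue: asserting that feasibility of the instance makes the RTA-prescribed permutations ``jointly realizable end-to-end'' glosses over the parity of the final in-row permutations; the paper avoids this by recursing down to a $3\times 4$ base case that it solves directly as an ordinary $15$-puzzle.
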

\begin{proof}
We give a procedure that sorts the right $\frac{1}{3}$ of the $3 \times m$ grid in $O(m)$ steps. A recursive application of the procedure then yields an overall $O(m +\frac{2}{3}m +\frac{4}{9}m + \ldots) = O(m)$ makespan. 

To start, we move the escort to the bottom left corner for both the start and goal configurations, which takes $4$ steps. These will be the new start/goal configurations. 
From here, for tiles on a $3\times m$ grid, let $B$ denote the set of tiles corresponding to the $\lfloor \frac{m-2}{3} \rfloor$ rightmost columns in the goal configuration. We refer to these tiles as $B$ tiles and the rest as $W$ tiles. 
We will treat the boundary cells as a \emph{circular highway} moving clockwise and the inner middle line as a \emph{workspace} to move $B$ tiles to their destination. As an example, the $B$ (resp., $W$) tiles are shown in dark gray (resp., light gray) in Fig.~\ref{fig:3m}(b)-(g). The algorithm operates in three stages: (1) move $B$ tiles to the highway, (2) arrange $B$ tiles properly in the workspace, and (3) move $B$ tiles to goals.

\begin{figure}[h]
    \centering
    \begin{overpic}
    [width=\columnwidth]{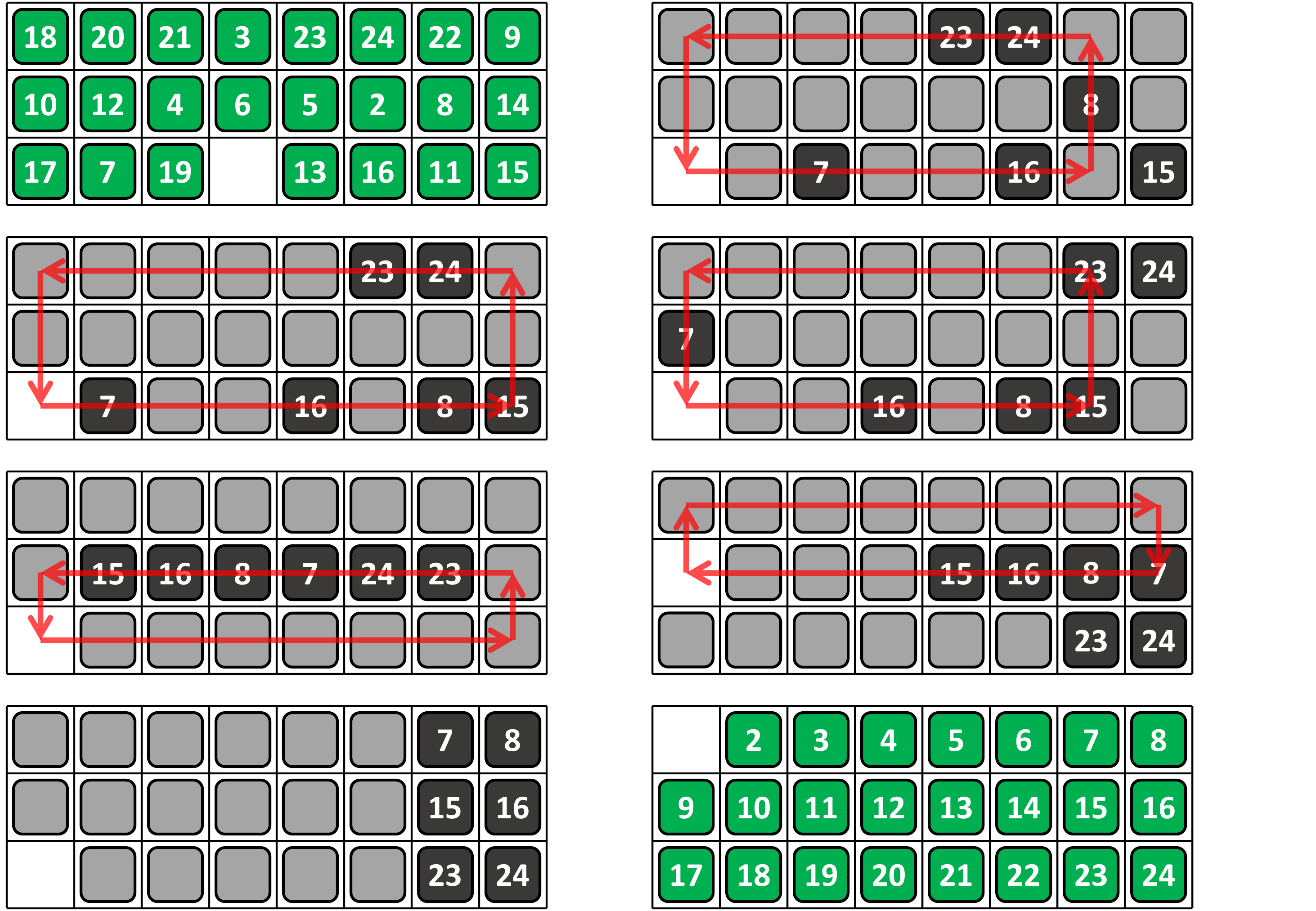}    %{figs/MOGSTP Outline assembled.png}
    \put(43.5, 61){{\small (a)}}
    \put(93.5, 61){{\small (b)}}
    \put(43.5, 43){{\small (c)}}
    \put(93.5, 43){{\small (d)}}
    \put(43.5, 24.5){{\small (e)}}
    \put(93.5, 24.5){{\small (f)}}
    \put(43.5, 7){{\small (g)}}
    \put(93.5, 7){{\small (h)}}
    \end{overpic}
    \caption{Sorting right $\frac{1}{3}$ on a $3 \times 8$ grid with one escort. (a) and (h) are the start and goal configurations. (b)$\to$(c): A cwr-shift inserts $B$ tile $8$ to the circular highway. (c)$\to$(d)$\to\ldots\to$(e): A series of r-shifts orders $B$ tiles in the workspace. (e)$\to$(g): Additional r-shifts move $B$ tiles to goals.}
    \label{fig:3m}
\end{figure}

To execute the first stage, if a $B$ tile in the workspace has a $W$ tile above it, then execute a cwr-shift to insert the leftmost such $B$ tile into the highway to not affect tiles to the right (Fig.~\ref{fig:3m}(b)-(c)). 
Otherwise, apply \emph{adjustment} cwr-shifts to the circular highway until a $B$ tile in the workspace has a $W$ tile above it.
%Note that once a white tile appears over a $B$ tile in the workspace, it will never be shifted away through a circular highway rotation, since insertions are done left to right.
Because there are $m-2$ $B$ tiles, at most $m-2$ adjustments are needed to move a $W$ tile over each $B$ tile, and so the total number of steps for this stage is at most $4[(m-2) + (m-2)] = 8m - 16$.
% I think amortization refers to average case, which doesn't seem to apply here
% Amortization usually refers to local unevenness but total being the same 
% Ah, I see, okay

The second stage uses the same operation to insert the $B$ tiles into the workspace. The difference is that $B$ tiles are now being inserted in the exact spot in the workspace corresponding to the desired permutation.
Through the process, a tile in $B$ never makes a full lap around the circular highway. Therefore, at most $2m+1$ adjustments are needed, with at most $m-2$ $B$ tile insertions, taking at most $4[(2m+1) + (m-2)] = 12m - 4$ steps.

In the third stage, apply r-shifts to move $B$ tiles to their goals as shown in Fig.~\ref{fig:3m}(e)-(g), taking $4[m - \lfloor \frac{m-2}{3} \rfloor]$ steps.

Now, approximately the right third of the grid has been solved in $4[6m - 5 - \lfloor \frac{m-2}{3} \rfloor]$ steps; we recurse in the same manner for $m \geq 5$ and solve the base case of $m = 4$ in 53 times steps \cite{korf2008linear} by treating the problem as a normal $(n^2 - 1)$-puzzle instance.
Through careful counting, we can conclude that $120m$ steps are always sufficient. 
\end{proof}

The $120m$ makespan can be significantly reduced with more careful analysis, which we omit due to limited space. The important takeaway is Lemma~\ref{l:3m} shows \gstp on $3\times m$ grids can be solved in $O(m)$ steps, sufficient for establishing the upper bounds in our claimed contribution. In what follows, we describe related results needed to get the constant factors stated in Table~\ref{tab:bounds} omitting the proofs. 

If we have two escorts, we can cycle them on opposite corners of their respective r-shifts to allow two cwr-shifts to happen simultaneously, leading to the following. 
\begin{corollary}
\gstp instances with two escorts on a $3 \times m$ grid can be solved in $60m$ steps.
\end{corollary}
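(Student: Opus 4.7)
The plan is to adapt the three-stage algorithm in the proof of Lemma~\ref{l:3m} to use both escorts in parallel, following the hint in the preceding paragraph. In the single-escort algorithm, each r-shift takes 4 time steps because the lone escort must cycle around all four corners of the shift rectangle. With two escorts placed at diagonally opposite corners of the same rectangle, the two escorts can each cover two adjacent sides simultaneously, completing the rotation of the boundary tiles in only 2 time steps. All \gstp constraints remain satisfied because at every time step each escort performs a local unit move and the tile entering the vacated cell moves in the same (non-perpendicular) direction as the escort it follows.

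First, I would re-execute the three-stage scheme from Lemma~\ref{l:3m} verbatim (move $B$ tiles into the circular highway, arrange them in the workspace, and move them to their goals), maintaining throughout the invariant that the two escorts occupy diagonally opposite corners of the rectangle on which the next r-shift is to be performed. For each transition between successive r-shifts on sliding subrectangles of the highway, I would check that the escorts' terminal positions after one r-shift coincide with the diagonal-corner positions required by the next r-shift, so no additional relocation steps are needed. At transitions between stages (and at the base case $m = 4$ solved via $(n^2-1)$-puzzle methods), a bounded $O(1)$ overhead absorbs any mismatch without affecting the leading coefficient.

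Second, I would re-tally the step counts: every application of a cwr-shift or ccwr-shift now contributes $2$ time steps instead of $4$, which halves each of the Stage~1, Stage~2, and Stage~3 budgets of the single-escort proof. Combined with the unchanged recursion of the form $T(m) \le T(2m/3) + O(m)$ and a base case whose makespan constant can be halved analogously, the overall bound becomes $60m$.

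The main obstacle is the careful bookkeeping required to confirm that the diagonal-corner invariant propagates through every r-shift in the algorithm without extra repositioning, particularly at the handoff between the insertion and placement stages and around the asymmetries introduced by the $B/W$ tile partition. Verifying the corner-following and no-head-on constraints at every simultaneous two-escort move is routine but tedious, since each r-shift is now executed as two synchronized half-cycles rather than a single sequential cycle. Once these local checks are in place, the factor-of-two savings propagate cleanly through the recursion to yield the claimed $60m$ upper bound.
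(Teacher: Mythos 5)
Your proposal is correct and matches the paper's (very terse) justification: the paper likewise obtains the factor-of-two savings by cycling the two escorts on opposite corners of the r-shift rectangles so that each rotation of the boundary tiles costs half as many steps, and then reruns the three-stage $3\times m$ routine of Lemma~\ref{l:3m} with the halved r-shift cost. Your additional checks (the diagonal-corner invariant propagating through consecutive r-shifts, and the \cfc/collision verification for the two synchronized half-cycles) are exactly the bookkeeping the paper omits.
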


With significant additional efforts but following a similar line of reasoning, we can establish on $2\times m$ grids that
\begin{lemma}\label{l:2m1}%[Linear $3\times m$ Sort]
Feasible \gstp instances with a single escort on a $2 \times m$ grid can be solved in $58m$ steps.
\end{lemma}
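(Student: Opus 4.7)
The plan is to mirror the recursive block-sorting structure used in the proof of Lemma~\ref{l:3m}, but adapted to the narrower $2 \times m$ setting. The fundamental primitive remains the r-shift on a $2 \times k$ sub-rectangle: the escort jumps between the four corners (each jump a row- or column-jump of up to $k-1$ cells in a single time step), costing $4$ steps and cyclically permuting the $2k - 1$ tiles around the sub-rectangle's perimeter. Before starting, I would first invoke the linear-time feasibility check and move the escort to a canonical corner in both the start and goal configurations, which costs $O(m)$ steps with a small constant factor.

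Next, I would iterate a subroutine that sorts the rightmost constant fraction of the grid in $O(m)$ steps, followed by a recursive call on the remaining left subgrid. Concretely, designate the rightmost $\lfloor m/c \rfloor$ columns as the target block $B$ for a constant $c$ to be optimized. The subroutine runs in three stages: a \emph{gathering} stage that uses wide r-shifts on $2 \times k$ sub-rectangles spanning nearly the whole width to cycle $B$-destined tiles into the right portion of the grid; an \emph{arrangement} stage that uses progressively narrower r-shifts restricted to a buffer region just left of the goal locations of $B$ to order the gathered tiles; and a \emph{placement} stage that slides the ordered block into its goal positions, using only sub-rectangles that do not overlap with already-finalized columns. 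Each stage uses $O(m)$ r-shifts and hence $O(m)$ time steps.

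After $B$ is placed, I would recurse on the remaining $2 \times m'$ subgrid with $m' \le m \cdot (1 - 1/c)$, yielding a geometric series $m + m(1 - 1/c) + m(1 - 1/c)^2 + \cdots = O(m)$. Summing the per-stage constants (setup, gathering, arrangement, placement, and the recursion overhead), together with a small base case handled directly as in the $(N^2-1)$-puzzle literature, and optimizing the choice of $c$ yields the claimed $58m$ bound.

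The main obstacle is that on a $2 \times m$ grid every cell lies on the perimeter of any sub-rectangle containing it, so the ``circular highway plus inner workspace'' decomposition available in the $3 \times m$ case is unavailable. Every r-shift simultaneously moves all non-escort tiles inside its sub-rectangle, so the gathering and arrangement stages must be carefully choreographed to avoid re-scattering $B$-destined tiles already moved rightward or disturbing previously placed tiles. Showing that the arrangement stage can be completed in $O(m)$ rather than $O(m^2)$ r-shifts despite the absence of a workspace is the crux of the analysis, and it is the bookkeeping in this stage that controls whether the total cost tightens to $58m$ rather than a larger constant multiple of $m$.
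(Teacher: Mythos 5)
Your high-level plan faithfully mirrors the $3\times m$ argument of Lemma~\ref{l:3m}, and you correctly identify the place where that argument breaks: on a $2\times m$ grid there is no interior row, so the ``circular highway plus workspace'' decomposition that makes the second (arrangement) stage run in $O(m)$ r-shifts is simply unavailable. The problem is that your proposal stops at identifying this obstacle rather than resolving it. The entire content of the lemma lives in that arrangement stage. In the $3\times m$ proof, the $O(m)$ bound for stage two rests on two facts: (i) tiles already inserted into the workspace are \emph{not} disturbed by subsequent adjustment shifts of the highway, and (ii) each $B$ tile travels less than one full lap, so the total number of adjustment shifts is $O(m)$ rather than $O(m)$ \emph{per tile}. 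On a $2\times m$ grid, every r-shift moves every non-escort tile on the perimeter of its sub-rectangle, and every cell is on that perimeter, so any ``parked'' partially-sorted block is re-scattered by the very shifts you need to bring the next tile into position. A naive insertion-style fix costs $O(m)$ shifts per tile, i.e., $O(m^2)$ steps, which is exactly the outcome you say must be avoided but do not show how to avoid. Asserting that the stages ``must be carefully choreographed'' is not a proof; a concrete invariant (e.g., which sub-rectangles are used at each step, and why previously placed tiles return to, or never leave, their intended positions) is required, and this is precisely the part the paper flags as needing ``significant additional efforts'' beyond the $3\times m$ reasoning.

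A secondary but nontrivial gap is the constant. The bound $58m$ is not asymptotic; it requires an explicit per-stage step count of the kind carried out in the proof of Lemma~\ref{l:3m} (e.g., $4[(m-2)+(m-2)]$ for stage one, $4[(2m+1)+(m-2)]$ for stage two, and the geometric-series tally over recursion levels). Your proposal replaces all of this with ``optimizing the choice of $c$,'' which cannot be verified without the missing subroutine: the number of adjustment shifts, the per-shift cost, and the recursion fraction $1-1/c$ all depend on the mechanism you have not specified. You should also address feasibility explicitly: with a single escort only half of all configurations are reachable, so the base case and the final two-column (or two-cell) residue must be argued to resolve exactly when the instance is feasible, as the paper does for the $3\times m$ and full-grid cases.
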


While simulating two row or column permutations at once can be useful in solving \gstp faster, the limited amount of space may prevent us from doing so. Instead, simulating the permutation of one row or column will be much more useful.

\begin{corollary}\label{c:2m11}%[Linear $2 \times m$ Half-Sort]
Given a single escort, a $2 \times m$ grid can be permuted to fill one of its rows arbitrarily in $27m$ steps.
\end{corollary}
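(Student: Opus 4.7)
My plan is to adapt the algorithm underlying Lemma~\ref{l:2m1} to exploit the weaker goal: since only the target row's contents are specified, roughly half of the ``bookkeeping'' work done by the full $2\times m$ solver becomes unnecessary, and this should be exactly what brings the constant factor down from $58$ to $27$.

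First I would identify the $m$ tiles (call them \emph{T}-tiles) destined for the target row, along with their target columns; the remaining $m-1$ tiles together with the escort may occupy the companion row in any order. Then I would process the target row column by column, say from one end to the other, using the same cyclic-boundary view as in Lemma~\ref{l:3m}: the boundary of the currently unfixed $2\times m'$ sub-grid forms a circular highway of length $2m'$ that the escort can rotate via cwr-shifts. At each iteration, I would perform a bounded number of adjustment rotations along this highway to bring the next \emph{T}-tile into its target cell, then fix it with a single $2\times 2$ r-shift (or a pair of r-shifts, depending on parity). Crucially, because the companion row is a free workspace, I never need a second, corrective pass to also lock down tiles in that row --- this omission is where the savings originate.

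To obtain the constant $27$, I would amortize carefully: over the entire sweep the escort's net displacement along the boundary is at most the perimeter of the grid, roughly $2m$, plus at most a constant number of extra boundary traversals per iteration for adjustments. Summing over the $m$ iterations and multiplying by the $4$-step cost of each $2\times 2$ r-shift should give a bound of the form $Cm$; the claim is that by using the same tight scheduling as in Lemma~\ref{l:2m1} (reusing adjustment rotations across consecutive insertions so that each \emph{T}-tile's travel distance is charged only once), the constant $C$ can be kept at or below $27$.

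The main obstacle will be pinning down the constant exactly. The corner-following constraint forbids in-row tile swaps without borrowing a cell from the companion row, so every logical swap inflates into a full $2\times 2$ r-shift costing $4$ time steps, and the $27m$ budget therefore permits only about $\tfrac{27}{4}m \approx 6.75\,m$ r-shifts in total. Verifying that this suffices --- particularly near the ends of the grid, where the circular highway is short and routing room is scarce, and on the last few insertions, where the escort must be maneuvered back to a feasible parking position --- will require a careful case-by-case count analogous to (but tighter than) the one sketched for Lemma~\ref{l:3m}. I would expect the bulk of the proof-writing effort, as in Lemma~\ref{l:2m1}, to go into this accounting rather than into the high-level structure of the algorithm.
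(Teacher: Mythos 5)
Your high-level plan matches the paper's framework (circular-highway rotations realized by r-shifts, $2\times2$ r-shifts for insertions, and an amortized charge of adjustment rotations), and the intuition that constraining only one row roughly halves the work of Lemma~\ref{l:2m1} is the right one. However, there is a genuine gap at the heart of the argument: the amortization does not go through for the algorithm as you describe it. You insert each $T$-tile \emph{directly into its fixed target cell}, processing target cells in positional order (one end to the other). The number of adjustment rotations needed to bring the next $T$-tile to its target cell is then governed by where that tile happens to sit on the shrinking cycle, and nothing prevents an adversarial start configuration (e.g., $T$-tiles in reverse cyclic order relative to the target order) from forcing nearly a full lap of the cycle for every insertion, giving $\Theta(m^2)$ steps rather than $O(m)$. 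The paper's $3\times m$ argument, which you cite as your model, avoids exactly this trap by decoupling \emph{when} a tile is inserted (determined by the order in which tiles arrive at the insertion point as the highway rotates monotonically, so no tile ever laps the cycle) from \emph{where} it is inserted (a workspace position chosen to realize the desired permutation, with a final rotation aligning the workspace to the goals). Your single-sweep scheme collapses these two choices, and the claim that ``each $T$-tile's travel distance is charged only once'' is asserted rather than established; on a $2\times m$ grid, where there is no interior workspace separate from the highway, recovering this decoupling is precisely the nontrivial part of the proof.

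Separately, the quantitative content of the corollary --- the constant $27$ --- is never derived: you state that a careful case-by-case count ``will be required'' and that the constant ``can be kept at or below $27$,'' but the corollary \emph{is} that count. Until the per-stage step totals (adjustment rotations, insertion r-shifts, escort repositioning, and the end-of-grid boundary cases you correctly flag) are tallied against the $27m$ budget, the statement is not proved. Fixing the proposal requires (i) restructuring the algorithm into a gather-then-order-then-align form so that the one-lap amortization provably holds on the $2\times m$ cycle, and (ii) carrying out the explicit accounting.
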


With two escorts, we get significantly faster algorithms.
\begin{lemma}\label{l:2m2}
\gstp instances with two escorts on a $2 \times m$ grid can be solved in $10m - 13$ steps.
\end{lemma}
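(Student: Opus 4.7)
My plan is to construct an explicit algorithm with three main phases --- bubble setup, column-multiset correction, and in-column swap --- and then bound the step count of each phase. The key structural observation is that when both escorts occupy the same column, the grid contains an empty column, a \emph{bubble}, whose translation costs exactly one time step per column shifted (the adjacent column moves into it as a parallel train), and whose adjacent column can be permuted via a constant-cost $2\times 2$ rotation. These two cheap primitives, unavailable in the single-escort setting, are what will allow a dramatic improvement over Lemma~\ref{l:2m1}.

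Phase~1 forms the bubble by moving the two escorts into the same end-column in at most $m + O(1)$ steps. Phase~2 sweeps the bubble across the grid (with a second backward sweep if needed) to guarantee that each column contains exactly the pair of tiles belonging to its goal column: this is done by ferrying each misplaced tile into the bubble and releasing it at the correct column en route, producing a corrected column-multiset in roughly $4m$--$5m$ steps. Phase~3 is a final sweep that performs a $2\times 2$ rotation on each column whose two tiles are vertically flipped relative to the goal, for roughly another $4m$ steps. Summing, and subtracting for reuse of the bubble's position at phase boundaries together with the fact that the last sweep can terminate once the trailing columns are already correct, yields the claimed $10m - 13$.

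The main obstacle is twofold. First, every primitive must be designed to respect \cfc, which forbids a tile from moving into a cell simultaneously vacated by a perpendicularly-moving tile; this forces each in-column swap and each ferrying operation to be laid out as a short, carefully-scheduled sub-sequence rather than one large parallel move, and the per-sweep step counts above depend on each such sub-sequence costing a very small constant. Second, pinning down the specific constant $10$ and the additive $-13$ requires delicate amortized bookkeeping across phase boundaries and at the grid's extreme columns, where the bubble is space-constrained and some primitives cost an extra step. Correctness is comparatively straightforward: each phase maintains the invariant that already-correct columns remain untouched, so once Phase~3 completes, the algorithm is in the goal configuration.
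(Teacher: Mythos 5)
There is a genuine gap, and it is quantitative rather than a matter of missing detail: your Phase~2 cannot run in $O(m)$ steps with the primitives you allow yourself. Define the total horizontal transport of an instance as the sum over all tiles of the number of column-changes they undergo; this is at least the sum of the horizontal distances between start and goal columns, which for a worst-case (or even a typical random) permutation on a $2\times m$ grid is $\Theta(m^2)$. Your two primitives move very few tiles per time step: a bubble translation, as you define it (``one time step per column shifted, the adjacent column moves into it as a parallel train''), moves exactly $2$ tiles one column per step, and a $2\times2$ rotation contributes $O(1)$ transport. Hence over a $10m-13$-step schedule your algorithm performs only $O(m)$ units of horizontal transport, which is off by a factor of $m$ from what is required. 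Equivalently, ferrying misplaced tiles ``into the bubble and releasing them at the correct column'' carries $O(1)$ passengers per sweep, so correcting the column multiset of an arbitrary permutation needs $\Theta(m)$ sweeps, i.e.\ $\Theta(m^2)$ steps, not the $4m$--$5m$ you claim. The constant $10m-13$ is therefore not reachable by this route no matter how carefully the phase boundaries are amortized.

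The fix requires primitives in which a single escort move displaces $\Theta(m)$ tiles at once, i.e.\ long escort jumps along an entire row, so that each time step can deliver $\Theta(m)$ units of transport. This is exactly the mechanism visible in the paper's proof of Lemma~\ref{l:3m} (whose proof for the $2\times m$ cases is deferred to the supplementary material): the boundary is treated as a circular highway and each r-shift, costing $4$ steps, advances all $\Theta(m)$ boundary tiles by one position, while tiles are inserted into their sorted slots one at a time. An insertion-sort over such global rotations gives total cost (number of insertions) $\times$ (constant per rotation) $= O(m)$ even though the aggregate transport is $\Theta(m^2)$. Your high-level decomposition into ``get the right pair into each column, then fix vertical flips'' is fine, and your observation that the parallel translation of a full column respects \cfc is correct; but you should rebuild Phase~2 around whole-row jumps (e.g., cycling the $2m$-cell boundary with the two escorts breaking the rotation at the corners, where \cfc forbids simultaneous turning) rather than around a slowly translating two-cell bubble.
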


\begin{corollary}\label{c:2m21}
Given two escorts on the left of the top row of a $2 \times m$ grid, the bottom row can be arbitrarily permuted in $6m-1$ time steps, maintaining the position of the escorts.
\end{corollary}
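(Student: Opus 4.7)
The plan is to present a three-phase direct algorithm, analogous in structure to the proof of Lemma~\ref{l:2m2} but exploiting the relaxed requirement that only the bottom row has a prescribed final configuration, so that the top row's $m-2$ non-escort cells can serve as unconstrained scratch space.

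\emph{Phase 1 (setup, 2 steps).} Translate the escort pair from $(1,1),(1,2)$ to $(1,m{-}1),(1,m)$ by two successive synchronous leftward shifts of the top-row chain; the first shift moves one escort across the chain to $(1,m)$ (each top tile moves one edge to the left), and the second relocates the other to $(1,m{-}1)$. The bottom row is untouched.

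\emph{Phase 2 (right-to-left scan, at most $6(m{-}1)$ steps).} Process bottom cells $j = m, m{-}1, \ldots, 2$ in this order, maintaining the invariant that at the start of iteration $j$ the escorts lie at $(1,j{-}1),(1,j)$ and the cells $(2,j{+}1),\ldots,(2,m)$ hold the targets $\pi(j{+}1),\ldots,\pi(m)$. In each iteration I locate the target tile $\pi(j)$ within the unfinalized $2\times j$ left block, route it to $(2,j)$ via a short sequence of cwr-/ccwr-shifts around the escort pair (parking any displaced tile in the top row, which has no final constraint), and then translate the escort pair one column to the left. A case analysis on the position of $\pi(j)$ relative to the escort pair yields at most $6$ steps per iteration.

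\emph{Phase 3 (boundary, at most $1$ step).} After iteration $j{=}2$, the escorts have returned to $(1,1),(1,2)$, and since all other bottom cells are correct and $\pi$ is a permutation, $(2,1)$ automatically contains $\pi(1)$ by counting, provided the scan never parks $\pi(1)$ in the top row; a single boundary fix at iteration $j=2$ ensures this. Summing, the total is at most $2 + 6(m{-}1) + 1 \le 6m - 1$.

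\emph{Main obstacle.} The chief difficulty is establishing the six-step-per-iteration bound in the scan, since the target tile $\pi(j)$ can reside in any of $2j{-}2$ non-escort cells of the unfinalized block and must be routed to $(2,j)$ without disturbing already-finalized bottom cells. A compact four-case analysis (top vs.\ bottom row; immediately adjacent vs.\ far from the escort pair) handles the routing, but verifying that top-row tiles parked by earlier iterations do not obstruct later routings requires an additional bookkeeping argument, namely that parked tiles can always be placed into columns whose bottom cell has already been finalized, so that subsequent escort translations pass over them harmlessly. Tightening the boundary interaction at $j=2$ to preserve the counting argument in Phase~3 is the other subtle piece needed to squeeze the constant to $6m - 1$.
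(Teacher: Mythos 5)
The paper relegates its proof of this corollary to the supplementary material, so I am judging your argument on its own terms. It has a genuine gap that I do not see how to repair within your framework.

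The fatal step is the claim in Phase 2 that each iteration costs at most $6$ steps, including the case (which your own ``four-case analysis'' admits) where $\pi(j)$ is ``far from the escort pair.'' Under the \gstp motion model, the continuous-uniform-motion constraint forces every individual tile to move at most one grid cell per time step; the ``jumps'' and r-shifts only relocate the \emph{escort} quickly, while each tile in the moving train advances by exactly one cell. Consequently, if $\pi(j)$ currently sits at $(2,1)$ or $(1,1)$ and must reach $(2,j)$, it needs at least about $j-1$ time steps of its own motion, and no sequence of six cwr-/ccwr-shifts can deliver it. Charging only $O(1)$ steps to the iteration in which a tile is placed is therefore an invalid accounting scheme unless you can prove an invariant that $\pi(j)$ is always within $O(1)$ of the escort pair when iteration $j$ begins --- and nothing in your scan establishes that; a permutation such as the reversal forces almost every target tile to traverse $\Theta(m)$ columns. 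The same objection applies to the parked top-row tiles: a tile parked at column $i$ that is needed at column $j\gg i$ cannot be retrieved in $O(1)$ steps later.

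The way the paper makes such bounds work (visible in its proof of Lemma~\ref{l:3m}) is to \emph{amortize}: treat the $2\times m$ block as a circular highway and perform $O(m)$ rotation steps in total, during which \emph{all} tiles advance simultaneously by one cell per rotation, extracting or inserting each tile at the moment it passes its destination column. The $\Omega(m)$ travel of each of the $\Omega(m)$ tiles is then paid for by the same $O(m)$ global moves rather than by per-tile routing. Your selection-scan structure, which finalizes one bottom cell per iteration by individually routing its target, cannot achieve a linear total without such simultaneity, so the argument needs to be restructured around a global rotation/insertion scheme rather than patched.
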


Corollaries~\ref{c:2m11} and~\ref{c:2m21} will be instrumental in parallelizing row and column permutations necessitated by the RTA Shuffles without wasting additional steps in permuting the other row. %While shuffling two rows at once with the previous lemma would be ideal, issues arise when trying to permute two rows at once since escorts must replace two of the tiles in those rows.

\subsection{Tighter Makespan Upper Bounds for \gstp}
We are now ready to tackle solving full \gstp instances. 
For \gstp, we will only examine the case in which grid dimensions are at least $2$; the problem is otherwise trivial. 

\begin{theorem}%[Single Escort MRPP with CFC Makespan]
Feasible single-escort \gstp instances can be solved in $81m_1 m_2 + 6m_1 + 9m_2 - 3$ steps.
\end{theorem}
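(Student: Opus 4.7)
The plan is to apply the Rubik Table Algorithm (Sec.~\ref{sec:rta}) to the $m_1 \times m_2$ grid to reduce the reconfiguration to three batches of shuffles --- $m_1$ row shuffles, then $m_2$ column shuffles, then $m_1$ more row shuffles --- and to simulate each batch using the single-escort $2 \times m$ subroutine of Corollary~\ref{c:2m11}. For a row-shuffle batch, I group consecutive rows into $\lfloor m_1/2 \rfloor$ pairs; on each $2 \times m_2$ strip, the two prescribed per-row permutations are realized by a pair of Corollary~\ref{c:2m11} invocations at a combined cost of $2 \cdot 27 m_2 = 54 m_2$ steps per strip. Summing over all strips yields $27 m_1 m_2$ steps per row-shuffle batch, and the column-shuffle batch is handled symmetrically on $m_1 \times 2$ strips for another $27 m_1 m_2$ steps. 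The three batches together contribute the dominant term $3 \cdot 27 m_1 m_2 = 81 m_1 m_2$.

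The lower-order $6 m_1 + 9 m_2 - 3$ term will absorb three kinds of overhead: (i) the $O(1)$ escort displacements between successive pair invocations, which accumulate linearly in $m_1$ or $m_2$; (ii) a leftover odd row or column when $m_1$ or $m_2$ is odd, merged with its neighbor into a $3 \times m_2$ (resp., $m_1 \times 3$) block sorted via Lemma~\ref{l:3m}; and (iii) parity corrections, since the single-escort reachable set is the alternating group, so RTA-generated targets may occasionally demand an extra adjacent swap within a pair. A careful accounting of these contributions, each linear in $m_1$ or $m_2$, gives the stated additive bound.

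The main obstacle is the composition of two Corollary~\ref{c:2m11} invocations on the same $2 \times m_2$ pair: the first fixes, say, the top row to its RTA target, and the second must fix the bottom row without scrambling the top. This requires the first invocation to be arranged so that the bottom row already contains exactly the tiles destined for it (in some order) and the escort sits in a designated corner, after which a symmetric second invocation with roles swapped suffices. Making this intermediate invariant precise while respecting \cfc, tracking the escort's exact position across batch boundaries so that every transition costs only $O(1)$ steps, and handling the odd-dimension and parity cases constitute the detailed case analysis deferred to the supplementary material.
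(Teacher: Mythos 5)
Your high-level skeleton matches the paper's (RTA in row--column--row order, each batch simulated by single-escort $2\times m$ subroutines, yielding the $3\cdot 27\,m_1m_2$ leading term), but your decomposition of a batch is different and it breaks. The paper does \emph{not} pair rows into disjoint $2\times m_2$ strips and solve both rows of each strip; it uses a \emph{sliding} window: with the escort at the top-left of the bottom two rows, Corollary~\ref{c:2m11} pins only the \emph{bottom} row of the window, the window then moves up one row, and only the final two rows are solved simultaneously via Lemma~\ref{l:2m1} (at $58m$ steps). This distinction is not cosmetic. First, Corollary~\ref{c:2m11} only guarantees that \emph{one} designated row can be filled arbitrarily; it says nothing about preserving, or even controlling, the other row. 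Your second invocation on the same strip will in general scramble the row fixed by the first, and the ``intermediate invariant'' you propose (first call leaves the other row containing exactly its destined tiles, escort parked in a corner, then swap roles) is not supplied by the corollary --- establishing it is tantamount to proving a ``sort both rows of a $2\times m$ grid with one escort'' lemma, which is precisely Lemma~\ref{l:2m1}, and that costs $58m$, not $2\cdot 27m=54m$. Using Lemma~\ref{l:2m1} per pair gives $29m_1m_2$ per batch and $87m_1m_2$ overall, overshooting the claimed constant.

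Second, and more fundamentally, requiring \emph{both} rows of a closed $2\times m_2$ strip to reach prescribed contents with a single escort runs into the parity obstruction: the reachable configurations form only the even half, so for roughly half of your strips the RTA-prescribed target is simply unreachable within the strip. Your proposed fix --- ``an extra adjacent swap within a pair'' --- cannot work, because a single transposition is itself an odd permutation and hence exactly what the escort cannot realize. The paper's one-row-per-window scheme sidesteps this entirely: when only the bottom row of the window is pinned, the top row is unconstrained, so a reachable configuration with the correct bottom row always exists; the parity issue is confined to the single final two-row window, where it ``resolves on its own'' precisely because the overall \gstp instance is assumed feasible. To salvage your pairing approach you would need either to let the escort leave the strip to repair parity (with a cost and a \cfc analysis you have not provided) or to exploit slack in the choice of RTA shuffle targets, neither of which appears in your plan.
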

\begin{proof}
First, move the escort to the top left for start/goal configurations to get new start/goal configurations. 
Then, RTA is applied in a row-column-row fashion to yield three batches of row/column shuffles. Each batch requires sorting $m_1$ or $m_2$ rows or columns. In the $4 \times 4$ grid shown in Fig.~\ref{fig:gstp1}(a), a batch of row shuffles must permute each of the four rows highlighted in different colors. We are done if we can successfully perform each batch of shuffles. 

To execute a batch of shuffles, e.g., performing the four row shuffles on the $4 \times 4$ grid shown in Fig.~\ref{fig:gstp1}(a), we move the escort to the top left of the bottom two rows and apply Corollary~\ref{c:2m11} sort the last row. The procedure is repeated with the escort moved one row above, until there are only two top rows, at which point Lemma~\ref{l:2m1} is invoked to arrange the two rows simultaneously. The top two rows may not be solved exactly because not all $(N^2-1)$-puzzles are solvable, but the issue will resolve on its own if the \gstp instance is solvable. Other shuffles are executed similarly.
\begin{figure}[h]
    \centering
    \begin{subfigure}{0.24\columnwidth}
        \includegraphics[width=\columnwidth]{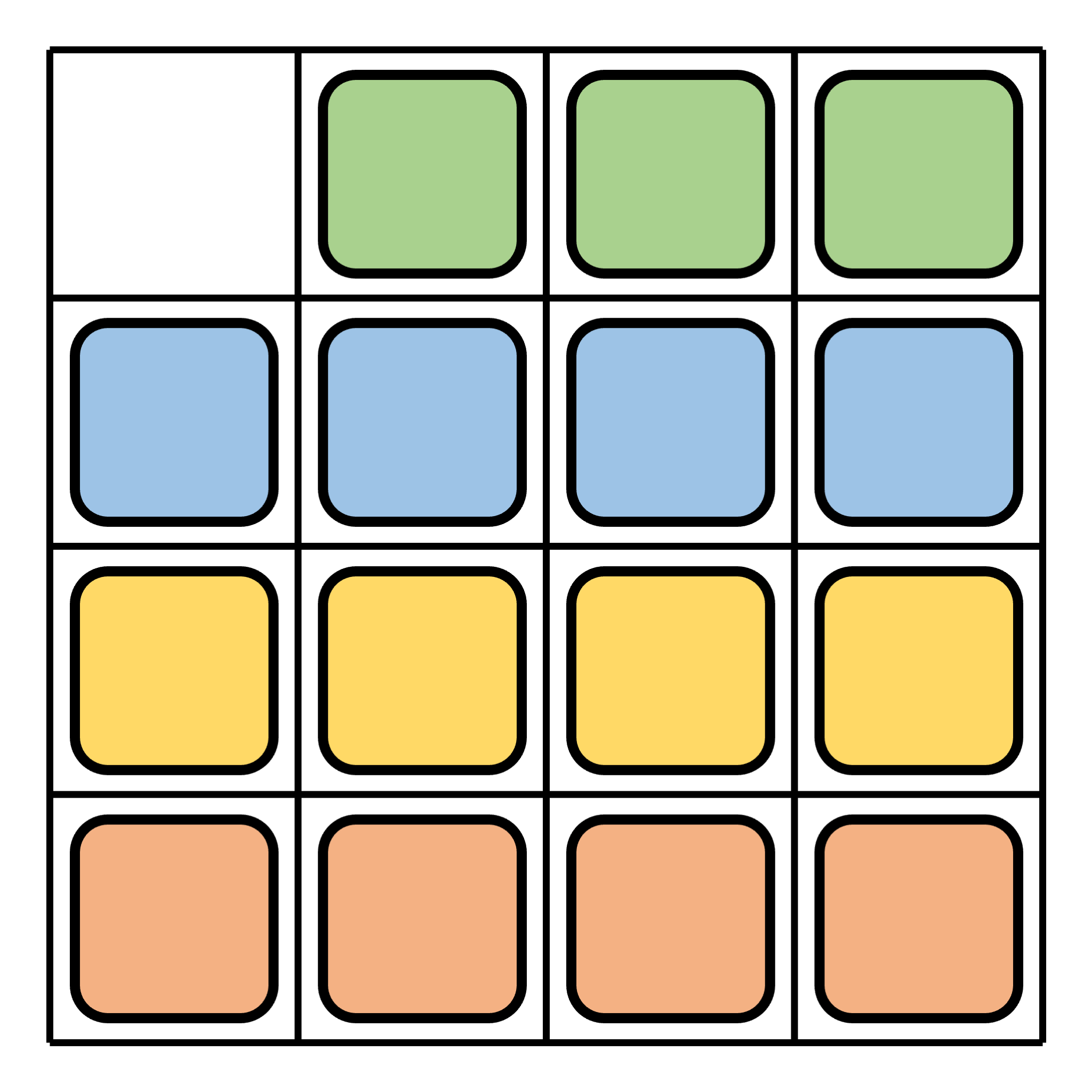}
        \caption{}
    \end{subfigure}
    \begin{subfigure}{0.24\columnwidth}
        \includegraphics[width=\columnwidth]{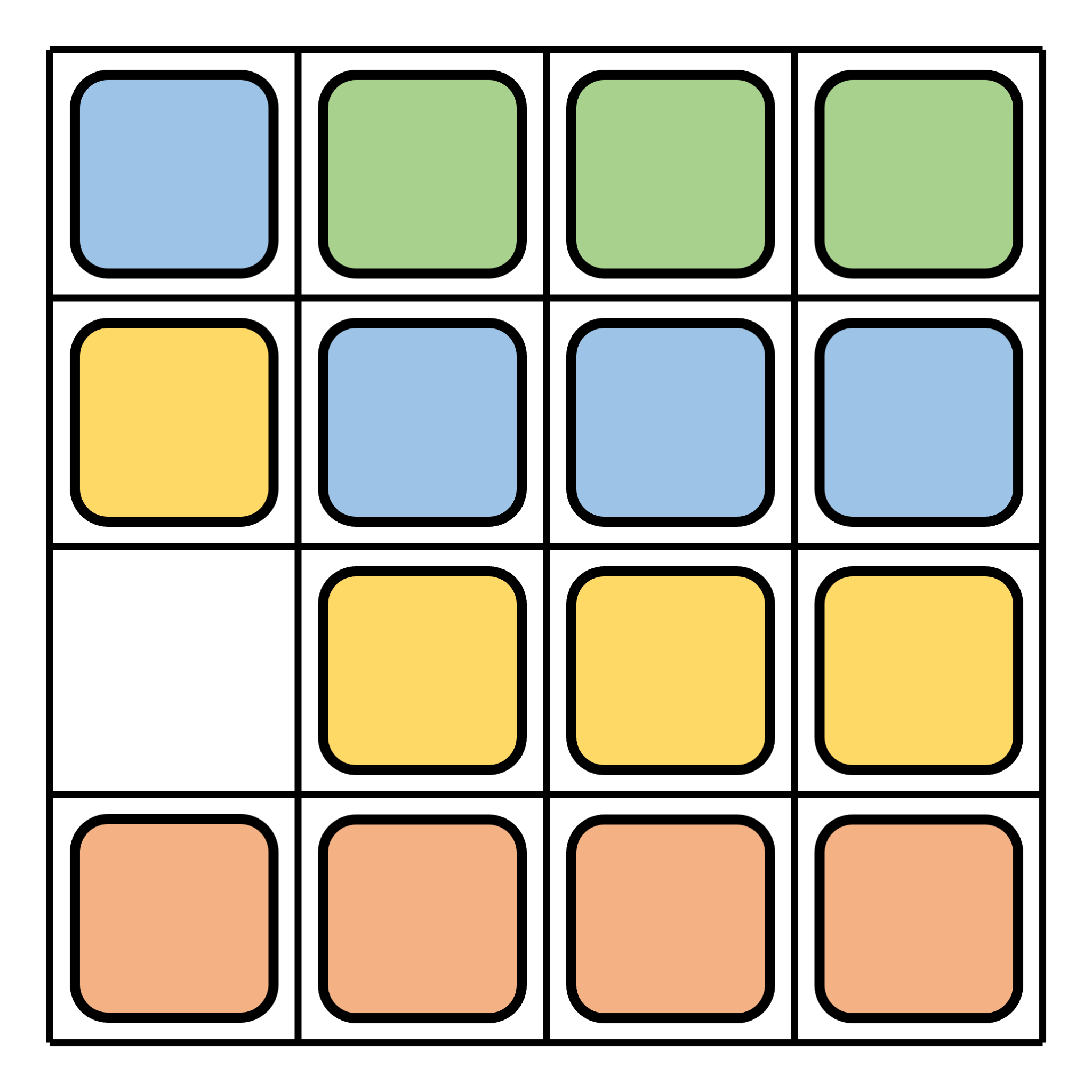}
        \caption{}
    \end{subfigure}
    \begin{subfigure}{0.24\columnwidth}
        \includegraphics[width=\columnwidth]{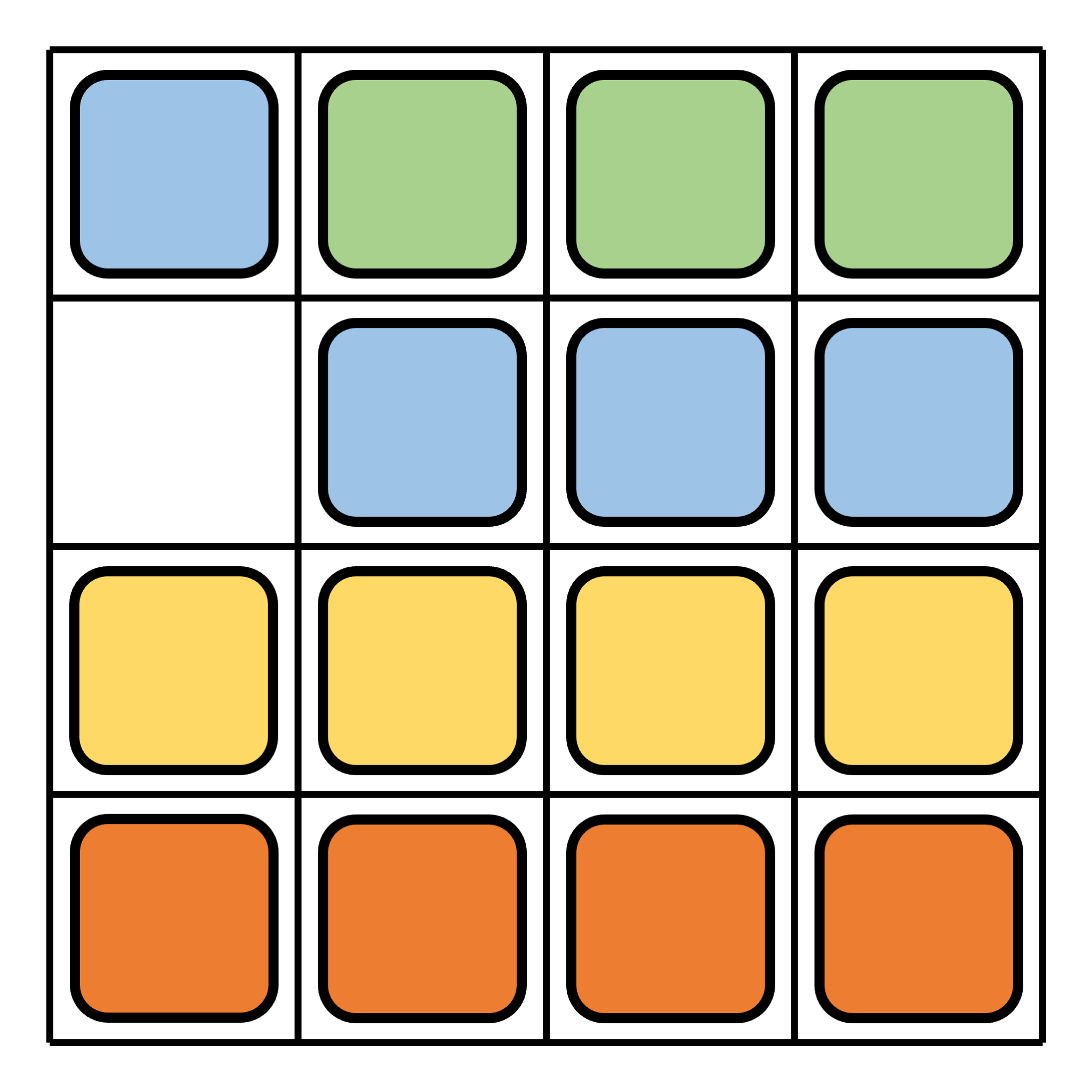}
        \caption{}
    \end{subfigure}
    \begin{subfigure}{0.24\columnwidth}
        \includegraphics[width=\columnwidth]{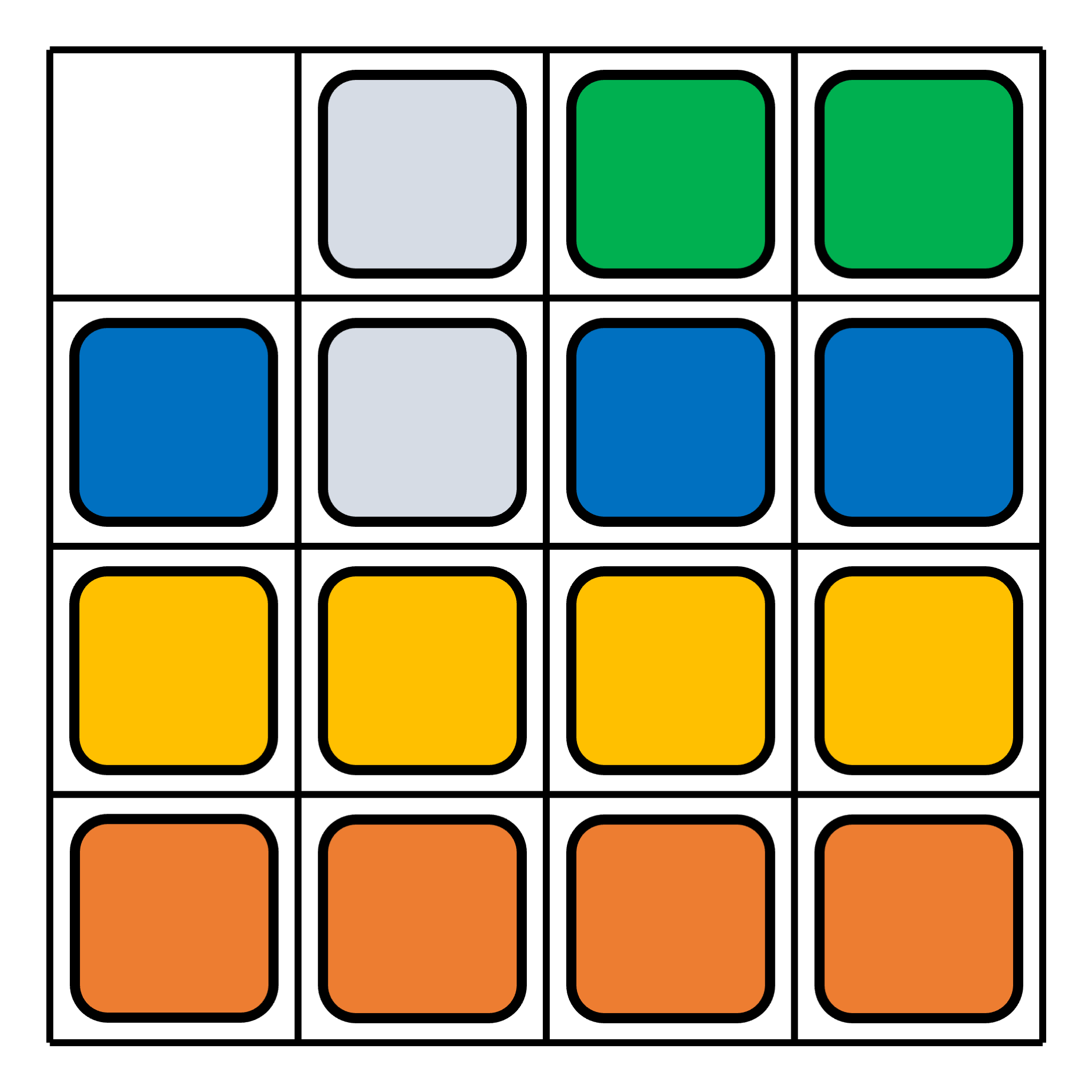}
        \caption{}
    \end{subfigure}
    
    \caption{Illustrating performing a batch of row shuffles on a $4 \times 4$ grid with a single escort. 
    (a). The (updated) start configuration, in which each row must be permuted. (b). To prepare for running Corollary~\ref{c:2m11}, the escort is moved to the top left of the last two rows. (c). After applying Corollary~\ref{c:2m11} to sort the last row, the escort is shifted above for the next application. (d) The top two rows will be sorted using Lemma~\ref{l:2m1}. Note that the top (resp., left) two rows (resp., columns) may not be fully solvable in the first two batches of shuffles, which is fine for the next set of column shuffles.}
    \label{fig:gstp1}
\end{figure}

Counting all steps, the total number is at most $81m_1 m_2 + 6m_1 + 9m_2 - 3$.
%The full procedure includes $2(m_1 - 2)$ half sorts of a $2 \times m_2$ grid, 2 full sorts of a $2 \times m_2$ grid, $m_2 - 2$ half sorts of a $m_1 \times 2$ grid, and a full sort of a $m_1 \times 2$ grid.
%Based on Corollary~\ref{c:2m11} and Lemma~\ref{l:2m1}, this takes $54m_2(m_1 - 2) + 116m_2 + 27m_1(m_2 - 2) + 58m_1 = 81m_1 m_2 + 8m_2 + 4m_1$ steps.
%Additionally, $4$ steps are needed to relocate the escort to the top left corner, $2(m_1 - 1)$ steps to move the escort between row permutations, and $(m_2 - 1)$ steps needed to move the escort between column permutations. The total number of steps is at most $81m_1 m_2 + 6m_1 + 9m_2 - 3$.
\end{proof}
%It takes $O(m_1^2 m_2)$ time to run the Rubik table algorithm, $O(m_1^2 m_2)$ time to compute the linear $2 \times m_1$ sorts and $O(m_2^2 m_1)$ time to compute the linear $2 \times m_2$ sorts, but we must specify all $m_1 m_2 - 1$ paths in the routing procedure, resulting in an $O(m_1^2 m_2^2)$ computation time.

\begin{theorem}%[Two Escort MRPP with CFC Makespan]
A two-escort \gstp instance can be solved in $18m_1 m_2 - 4m_1 - 5m_2 - 29$ steps.
\end{theorem}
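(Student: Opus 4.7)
The plan is to mirror the single-escort proof but to leverage the two-escort subroutines (Corollary~\ref{c:2m21} and Lemma~\ref{l:2m2}) in place of their single-escort counterparts, which should yield a roughly four-to-fivefold speedup per batch of RTA shuffles. First, I relocate both escorts to a fixed $1\times 2$ block in the top-left corner for both the start and the goal configurations; because escorts are interchangeable, this relocation can be carried out in $O(m_1+m_2)$ preparatory steps.

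Next, I apply RTA to the resulting configuration with the two escorts treated as labeled tiles, producing three batches of shuffles, say column-row-column. Each batch is executed by sweeping from the far side of the grid toward the corner holding the escorts. For a row batch, the two escorts are parked at the left end of the upper row of a $2\times m_2$ strip consisting of the current target row and the row immediately above it; Corollary~\ref{c:2m21} then permutes the target row in $6m_2-1$ steps while keeping the escorts fixed. Both escorts shift up one row in a single synchronous step, and the procedure repeats. Once only the top two rows remain, Lemma~\ref{l:2m2} permutes them jointly in $10m_2-13$ steps. Column batches are handled symmetrically using the corresponding transposed subroutines.

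A row batch thus costs approximately $(m_1-2)(6m_2-1) + (m_1-3) + (10m_2-13) = 6m_1 m_2 - O(m_1+m_2)$ steps, and a column batch costs an analogous expression with the roles of $m_1$ and $m_2$ swapped. Summing the three batches with the $O(m_1+m_2)$ preparation and inter-batch transition costs should yield the $18 m_1 m_2$ leading term matching the table.

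The hard part will be the arithmetic bookkeeping needed to pin down the exact lower-order constants $-4m_1-5m_2-29$. This requires carefully accounting for the cost of shifting escorts between consecutive applications of the subroutines within a batch, the cost of repositioning escorts between batches when switching from column-oriented to row-oriented shuffles, and any savings obtained by fusing these transitions with adjacent subroutine calls. The choice of column-row-column versus row-column-row RTA ordering is what breaks the $m_1$-$m_2$ symmetry in the final bound. Because two-escort \gstp is always feasible, the last pair of rows (or columns) always admits a solution, so no parity-based fix-up is required.
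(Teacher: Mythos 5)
Your proposal follows the paper's proof essentially exactly: RTA in three batches, Corollary~\ref{c:2m21} for all but the final two-row (two-column) strip of each batch, and Lemma~\ref{l:2m2} for that final strip, yielding the count $2[(m_1-2)(6m_2-1)+10m_2-13]+[(m_2-2)(6m_1-1)+10m_1-13]+4 = 18m_1m_2-4m_1-5m_2-29$. Two bookkeeping notes: the paper's ordering is row--column--row (not column--row--column), which is what places the $-4$ on $m_1$ and the $-5$ on $m_2$ as stated, and the initial escort relocation costs only a constant number of steps (the paper's $+4$), not $O(m_1+m_2)$, because an escort ``jump'' slides an entire train of tiles to a grid boundary in a single step.
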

\begin{proof}[Proof Sketch]
The proof is similar to the single escort case; with two escorts, we invoke Lemma~\ref{l:2m2} and Corollary~\ref{c:2m21} to speed up the process. The entire instance can be solved in $2[(m_1 - 2)(6m_2 - 1) + 10m_2 - 13] + [(m_2 - 2)(6m_1 - 1) + 10m_1 - 13] + 4 = 18m_1 m_2 - 4m_1 - 5m_2 - 29$ steps.
\end{proof}
%The computation takes $O(m_1^2 m_2^2)$ time as in the single escort case, bottlenecked by specifying the overall routing.

\begin{theorem}
A \gstp instance containing $2 \leq k < \min(m_1, m_2)$ escorts, where $k$ is even, can be solved with a makespan less than $\frac{44m_1 m_2}{k} + m_1(5 - \frac{24}{k}) + 15m_2 - 29$.
\end{theorem}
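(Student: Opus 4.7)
The plan is to mirror the two-escort argument while harvesting parallelism: I split the $k$ escorts into $k/2$ disjoint pairs and have each pair emulate the role played by the lone pair in the two-escort theorem, but restricted to its own thin horizontal strip (during a row-shuffle batch) or vertical strip (during a column-shuffle batch). Because distinct strips are disjoint and row/column shuffles never move a tile across strip boundaries, the pairs can invoke Lemma~\ref{l:2m2} and Corollary~\ref{c:2m21} synchronously, so the wall-clock cost of a batch reduces to the time a single pair spends on its own strip.

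First, for both the (updated) start and goal configurations I drive the $k$ escorts into a canonical horizontal formation at the top of the grid, at an $O(m_1+m_2)$ cost. Then I apply RTA in row-column-row order, producing two row-shuffle batches sandwiching one column-shuffle batch. In each row batch I assign pair $j$, $1\le j\le k/2$, to a horizontal strip of $2m_1/k$ consecutive rows. Pair $j$ first applies Lemma~\ref{l:2m2} to the bottommost two rows of its strip (cost $10m_2-13$) and then walks upward one row at a time, sorting each row with Corollary~\ref{c:2m21} (cost $6m_2-1$ per row), interspersed with $O(1)$-step escort repositionings that re-establish the ``two escorts at the top-left of a $2\times m$ grid'' precondition. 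The topmost two rows of each strip may require a final Lemma~\ref{l:2m2} application plus a parity adjustment of the same flavor as in the single-escort theorem. All $k/2$ pairs run synchronously, so a row batch finishes in at most $(2m_1/k)(6m_2-1)+O(m_2)$ steps; the transposed column batch finishes in at most $(2m_2/k)(6m_1-1)+O(m_1)$ steps; and switching between the row-friendly and column-friendly escort layouts between batches costs $O(m_1+m_2)$ per transition.

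Summing, the two row batches contribute at most $24m_1m_2/k$ plus lower-order terms, the column batch contributes at most $12m_1m_2/k$ plus lower-order terms, and the transitions together with the parity resolutions contribute $O(m_1+m_2)$. Inflating the per-batch constant to absorb (i) the Lemma~\ref{l:2m2} finishes at the top of each strip, (ii) the per-round transitions that move each pair to its next two-row block, and (iii) the synchronization slack accrued while pairs wait for the slowest one to finish a round, the entire algorithm comfortably fits inside $\frac{44m_1m_2}{k}+m_1\bigl(5-\tfrac{24}{k}\bigr)+15m_2-29$ steps, as claimed.

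The main obstacle is bookkeeping, not a new algorithmic idea. In decreasing order of delicacy: (a) I need to verify that the three-batch RTA structure still self-corrects the $(N^2-1)$-puzzle parity fault that can leave the topmost two rows (or leftmost two columns) of a strip unfinished within a batch, now that several pairs may be individually stuck; (b) when $k$ does not evenly divide $m_1$ (respectively $m_2$), the last pair's strip has a slightly different size and needs a small-case patch that must not destabilize the count; and (c) every invocation of Corollary~\ref{c:2m21} requires restoring its escort-placement precondition, and those repositionings must be amortized into the dominant coefficient rather than being charged as an independent $O(m_1m_2/k)$ term.
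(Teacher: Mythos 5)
Your proposal matches the paper's proof in its essentials: both split the $k$ escorts into $k/2$ pairs assigned to disjoint strips of roughly $2m_1/k$ rows (resp.\ $2m_2/k$ columns), run Lemma~\ref{l:2m2} and Corollary~\ref{c:2m21} synchronously within each strip across the three RTA batches, and fold the escort repositioning and strip transitions into lower-order terms. Two minor remarks: your accounting of the final row batch (sweeping upward with Corollary~\ref{c:2m21} at $6m_2-1$ per row plus one closing Lemma~\ref{l:2m2} call) is cheaper than the paper's, which charges a full $10m_2-13$ two-row solve per row there and is exactly where the constant $44$ comes from, so your version still fits under the stated bound; and your concern (a) about a parity fault is moot, since every $2\times m$ sub-instance with two escorts is always feasible.
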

\begin{proof}[Proof Sketch]
The main strategy is distributing the escorts across the rows/columns to introduce parallelism in solving a batch of row/column shuffles. For example, given $k=2\ell$ escorts, to solve a batch of $m_1$ row shuffles, we can distribute two escorts per $\frac{m_1}{\ell}$ rows. For each such $\frac{m_1}{\ell}$ rows, we invoke Lemma~\ref{l:2m2} and Corollary~\ref{c:2m21} to solve them, in parallel.  
This allows the entire batch of row shuffles to be completed in $O(\frac{m_1}{\ell})O(m_2)=O(\frac{m_1m_1}{\ell})=O(\frac{m_1m_1}{k})$ steps. 
Tallying over the three phases, the total number of steps required is bounded by
$\frac{44m_1 m_2}{k} + m_1(5 - \frac{24}{k}) + 15m_2 - 29$.
\begin{comment} 
We can compute the makespan of the described algorithm as follows:
completing the first set of row permutations takes $(\lceil \frac{2 m_1}{k} \rceil - 2)(6m_2 - 1) + 10m_2 - 13$ steps for the $2 \times m_2$ sorts, with $1 + \lceil \frac{2m_1}{k} \rceil - 1$ steps needed to move the escorts between the rows.
Similarly, the set of column permutations takes $(\lceil \frac{2m_2}{k} \rceil - 2)(6m_1 - 1) + 10m_1 - 13$ steps for the $m_1 \times 2$ sorts, with $1 + \lceil \frac{2m_1}{k} \rceil - 1$ needed to move the escorts between the columns.
The last set of row permutations takes $\lceil \frac{2m_1}{k} \rceil (10 m_2 - 13)$ time steps to perform the row permutations and $\lceil \frac{2m_1}{k} \rceil + 1$ steps to move the escorts between the rows, moving them to the top row afterwards.
Thus, rounding upwards, the overall makespan is less than $[4+m_1 + m_2] + [\frac{4m_1}{k} + \frac{2m_2}{k} + 4] + [\frac{44m_1 m_2}{k} + m_1(4 - \frac{28}{k}) + m_2(14 - \frac{2}{k}) - 37] = \frac{44m_1 m_2}{k} + m_1(5 - \frac{24}{k}) + 15m_2 - 29$.
\end{comment}
\end{proof}
Note that if $k$ is odd, we can simply ignore one escort.
It is also clear the results continue to apply for $k \min(m_1, m_2)$ by ignoring extra escorts, but we can get additional speedups when $k \geq m_1 + m_2 - 1$ due to enough room to use \ref{l:2m2} straightforwardly by having escorts along the top row and left column.
Compiling everything so far yields the claimed bounds given in Table.~\ref{tab:bounds}.
%Computing the instance depends primarily on determining the routing as well as the Rubik table algorithm, and so this can be done in $O(m_1^2 m_2 + \frac{m_1^2 m_2^2}{k})$ time, or more simply, $O(\max(m_1, m_2)^2 \min(m_1, m_2))$ time.

%\section{Extensions?}\label{sec:extension}
%\input{texs/05_extensions}

\section{Conclusion and Discussion}\label{sec:conclusion}
We show that it is NP-complete to compute makespan-optimal solutions for the generalized sliding-tile puzzle (\gstp). We further establish matching asymptotic makespan lower and upper bounds for \gstp for all possible numbers of escorts, and provide concrete constants for all makespan upper bounds. In ongoing and future work, we are examining (1) computing optimal solutions for other objectives for \gstp, (2) related variations of the \gstp formulation, and (3) developing practical algorithms for computing different optimal solutions for large-scale \gstp instances. 

\section*{Acknowledgement}\label{sec:ack}
We thank the reviewers and editorial staff for their insightful suggestions. This work is supported in part by the DIMACS REU program NSF CNS-2150186, NSF award CCF-1934924, NSF award IIS-1845888, and an Amazon Research Award. 
\bibliography{refs}
\end{document}